\documentclass{article}
\makeatletter
\let\asme@citex\@citex %
\makeatother

\usepackage{kr}

\makeatletter
\let\@citex\asme@citex %
\makeatother

\usepackage{times}
\usepackage{soul}
\usepackage{url}
\usepackage[normalem]{ulem}
\usepackage[hidelinks]{hyperref}
\usepackage[utf8]{inputenc}
\usepackage[small]{caption}
\usepackage{graphicx}
\usepackage{amsmath}
\usepackage{amsthm}
\usepackage{booktabs}
\usepackage{arydshln}
\usepackage{thmtools} 
\usepackage{thm-restate}

\usepackage{microtype}

\newtheorem{theorem}{Theorem}
\newtheorem{example}[theorem]{Example}
\newtheorem{definition}[theorem]{Definition}

\newtheorem{corollary}[theorem]{Corollary}
\newtheorem{proposition}[theorem]{Proposition}

\usepackage[switch,mathlines]{lineno}
\usepackage{etoolbox} %

\newcommand*\linenomathpatch[1]{%
  \cspreto{#1}{\linenomath}%
  \cspreto{#1*}{\linenomath}%
  \csappto{end#1}{\endlinenomath}%
  \csappto{end#1*}{\endlinenomath}%
}
\newcommand*\linenomathpatchAMS[1]{%
  \cspreto{#1}{\linenomathAMS}%
  \cspreto{#1*}{\linenomathAMS}%
  \csappto{end#1}{\endlinenomath}%
  \csappto{end#1*}{\endlinenomath}%
}

\expandafter\ifx\linenomath\linenomathWithnumbers
  \let\linenomathAMS\linenomathWithnumbers
  \patchcmd\linenomathAMS{\advance\postdisplaypenalty\linenopenalty}{}{}{}
\else
  \let\linenomathAMS\linenomathNonumbers
\fi

\linenomathpatch{equation}
\linenomathpatchAMS{gather}
\linenomathpatchAMS{multline}
\linenomathpatchAMS{align}
\linenomathpatchAMS{alignat}
\linenomathpatchAMS{flalign}

\title{Unifying approach to uniform expressivity of graph neural networks}

\iftrue
 \author{%
  Huan Luo
 \and
 Jonni Virtema\\
 \affiliations
 School of Computer Science, University of Sheffield, UK\\
 School of Computing Science, University of Glasgow, UK\\
 \emails
 \{jonni.virtema, huan.luo\}@glasgow.ac.uk
 }
 \fi

\usepackage{booktabs}
\usepackage{amsmath,amssymb}
\usepackage[colaction]{multicol}

\usepackage{mathtools}
\usepackage{xspace}
\usepackage{stmaryrd} %
\usepackage{enumerate}
\usepackage{thm-restate}

\newcommand{\cfo}{\mathrm{C}} 
\newcommand{\GML}{\mathrm{GML}}  %
\newcommand{\md}{\mathrm{md}}  %
\newcommand{\sd}{\mathrm{sd}}
\newcommand{\GN}{\mathcal{N}}  %

\newcommand{\Graphs}[1]{#1\text{-}\mathrm{Graphs}}  %

\newcommand{\AP}{\mathrm{AP}} %

\newcommand*{\ldblbrace}{\{\mskip-6mu\{}
\newcommand*{\rdblbrace}{\}\mskip-6mu\}}

\newcommand{\multiset}[1]{\ldblbrace#1\rdblbrace}

\newcommand{\dfn}\coloneqq

\newcommand{\gtmod}[2]{\langle #1 \rangle^{\ge #2}}

\newcommand{\T}{\mathcal{T}}
\newcommand{\R}{\mathbb{R}}
\newcommand{\N}{\mathbb{N}}
\newcommand{\agg}{\mathrm{agg}}
\newcommand{\comb}{\mathrm{comb}}
\newcommand{\cls}{\mathrm{cls}}
\newcommand{\emb}{\mathrm{emb}}

\newcommand{\col}{\mathrm{col}}
\newcommand{\hash}{\mathrm{HASH}}
\newcommand{\cb}{\mathrm{cb}}

\usepackage[most]{tcolorbox}
\newtcolorbox{mytodo}[2][]{
  arc=2mm,
  lower separated=false,
  enhanced,
  breakable,
  title={#2}, %
  #1 %
}

\newcommand{\jonni}[1]{\begin{mytodo}[colback=red!50!white,coltext=black]{Jonni}#1\end{mytodo}}

\usetikzlibrary{arrows.meta, positioning, shapes.geometric, calc}
\usepackage[T1]{fontenc}
\usepackage{subcaption}
\usepackage{natbib}

\begin{document}
\maketitle
\allowdisplaybreaks

\begin{abstract}
The expressive power of Graph Neural Networks (GNNs) is often analysed via correspondence to the Weisfeiler-Leman (WL) algorithm and fragments of first-order logic. Standard GNNs are limited to performing aggregation over immediate neighbourhoods or over global read-outs. To increase their expressivity, recent attempts have been made to incorporate substructural information (e.g. cycle counts and subgraph properties). In this paper, we formalize this architectural trend by introducing Template GNNs ($\T$-GNNs), a generalized framework where node features are updated by aggregating over valid template embeddings from a specified set of graph templates. We propose a corresponding logic, Graded template-modal logic ($\GML(\T)$), and generalized notions of template-based bisimulation and WL algorithm. We establish an equivalence between the expressive power of $\T$-GNNs and $\GML(\T)$, and provide a unifying approach for analysing GNN expressivity: we show how standard AC-GNNs and its recent variants can be interpreted as instantiations of $\T$-GNNs.
\end{abstract}

\section{Introduction}

The proliferation of structured data such as graphs and relational structures, and rapid development in the area of neural network machine learning in the past two decades, has led to the development of bespoke machine learning architectures for structured data, most notably Graph Neural Networks (GNNs) \citep{DBLP:journals/tnn/ScarselliGTHM09}.
Various extensions and variations of this model have been introduced and studied---however, from a high level perspective, a GNN iteratively and synchronously updates a vector of numerical features in every node of a graph by combining the node’s own feature vector with those of its neighbours \citep{gilmer2017neural}. In this level of abstraction, the GNN model is a variation of the local model of distributed computing of \cite{DBLP:journals/siamcomp/Linial92}.

Given an input graph, we want the GNN to output something meaningful. For example, this could be a property of the graph itself (e.g., decide whether the graph is Eulerian), a node property (e.g., inclusion to a dominating set), or a link prediction (e.g., predict whether any two nodes are path-connected). In this paper, we focus on the task of node classification where in the end a boolean-valued classification function is applied to the feature vector of each node.
In this framework, GNNs express unary (i.e., node-selecting) queries on graphs, which in the area of graph learning are known as node classifiers.

Due to intimate connections between the capabilities of GNNs, expressivity of query languages over graph data, modal logics, and methods related to the graph isomorphism problem, a growing community of researchers have revealed deep and precise connections between these seemingly disparate topics.

The (1-dimensional) Weisfeiler and Leman algorithm \citep{leman1968reduction}---a.k.a. colour refinement---is a well-known non-complete heuristic for deciding whether two graphs are isomorphic that is known to have the same (non-uniform) expressivity as message-passing GNNs \citep{morris2019weisfeiler,xu2019powerful}.

In the modal logic community, \cite{DBLP:journals/sLogica/Rijke00} developed the notion of graded bisimulation---relation between pointed labelled graphs---in order to characterise the expressivity of graded modal logic.  It is not hard to see that the notions of graded bisimulation and colour refinement are two sides of the same coin; two points in a graph are graded (k-)bisimilar if and only if the points are in the same colour class after applying the 1-WL algorithm (k rounds).

\cite{DBLP:journals/dc/HellaJKLLLSV15} discovered that there is a strong connection between the local model of distributed computing and graded modal logics. They established---utilising the notion of bisimulation---a one-to-one correspondence between local distributed algorithms and formulae of graded modal logic (in a non-uniform setting). \cite{DBLP:conf/nips/SatoYK19} further developed ideas from  \citeauthor{DBLP:journals/dc/HellaJKLLLSV15} and applied them to the GNN-setting.

The aforementioned characterisations of the expressivity of message passing models are all in some sense non-uniform. The connections to 1-WL relate to indistinguishability (with respect to any GNN of the given architecture), and the correspondence between local distributed algorithms and graded modal logic is formulated over degree bounded graphs. On the other hand, the uniform expressivity of a GNN architecture relates to studying the class of node classifiers that can be expressed by that GNN architecture (with respect to a full range of inputs that can be reasonably expected). The first characterisation of this kind was established by \cite{barcelo2020logical}---they established that a logical classier (i.e., one definable in first-order logic) is captured by an aggregate-combine-GNN (AC-GNN) if and only if it can be expressed in graded modal logic. They also established that each classifier definable in the two-variable fragment of first-order logic with counting quantifiers ($\cfo^2$) can be captured by a simple homogeneous aggregate-combine-readout-GNN (ACR-GNN)---leaving the converse, relative to logical classifiers, open.\looseness=-1

Since the seminal result of \cite{barcelo2020logical}, a plethora of similar results have been shown for different GNN architectures.
The connection between graded modal logic and AC-GNNs was further extended from logical classifiers to complete one-to-one correspondences by introducing mild forms of arithmetic to the logical side. \cite{DBLP:conf/icalp/BenediktLMT24} utilised logics with so-called \emph{Presburger quantifiers}, while \cite{DBLP:journals/theoretics/Grohe24} utilised counting terms and built-in relations.
  
\cite{grau2025correspondence} gave a comprehensive picture of correspondences between AC(R)-GNNs and various modal logics---the main differentiator of their approach to that of \citeauthor{barcelo2020logical} is the idea to replace the restriction to logical classifiers with the notion of bounded-GNNs (in the bounded setting, the ability of GNNs to differentiate multiplicities of elements in a multiset is limited to some constant). 

\cite{DBLP:journals/corr/abs-2508-06091} discovered that, actually, ACR-GNNs are strictly more expressive than $\cfo^2$, even when restricted to logical classifiers,  solving an open problem from  \citep{barcelo2020logical}. Their result indicates that, in general, when relating capabilities of GNNs to logics that are unable to do arithmetic, invariance under some bounded version of bisimulation is more appropriate restriction than restricting to logical classifiers. E.g., in the case of AC-GNNs, where a precise characterisation has been obtained in restriction to logical classifiers, invariance under graded bisimulation and FO-definability is equivalent to being invariant under bounded graded bisimulation \citep{otto2019graded}.

The provable limitations of the expressivity of aggregate-combine GNNs have led to various extensions of the GNN paradigm, as very simple tasks such as deciding graph-reachability or cycle detection are already beyond their capabilities.
Incorporating some form of recursion to the GNN formalism allows properties such as graph-reachability be expressible---typically non-uniform expressivity of GNN models of this type remain restricted to 1-WL, but their uniform expressivity is highly related to that of fixed-point logics such as the graded $\mu$-calculus \citep{BVVV25,pflueger2024recurrent,ahvonen2024logical}.

Two prominent paradigms for lifting GNN-expressivity beyond 1-WL
relate to enriching the graph features with subgraph counts \citep{DBLP:journals/pami/BouritsasFZB23,DBLP:conf/iclr/BevilacquaFLSCB22,DBLP:conf/nips/FrascaBBM22} or homomorphism pattern counts \citep{DBLP:conf/nips/BarceloGRR21,DBLP:conf/icml/JinBCL24}---a simple example here would be counting short cycles or paths, or homomorphisms to the complete graph of three vertices.
Yet, another branch of works has considered weaker GNN models; e.g, \citeauthor{DBLP:conf/kr/CucalaGMK23} 
(\citeyear{DBLP:conf/kr/CucalaGMK23,DBLP:conf/kr/CucalaG24}) related Max and Max-Sum GNNs to datalog.

Many of the characterisations of expressivity of various GNN-architectures use a similar recipe following the seminal result of \cite{barcelo2020logical}: First, propose an extension or variation of the AC-GNN model, define a variant of the WL-algorithm (or bisimulation) that corresponds to that GNN-model, and a modal logic that has modalities corresponding to the extension. One then needs to show that the bisimulation yields an upper bound for the expressivity of the GNN-model, and that every neural network yields a parameter $p$ such that $p$-bounded bisimulation is still an upper bound for its expressivity and has only finitely many $p$-bisimulation equivalence classes. Finally, one needs to show that every $p$-bisimulation equivalence class is definable in the newly defined logic and that every logical formula can be simulated by a GNN---the latter is often done by computing the truth value of each subformula recursively in the elements of the feature vectors.

While the recipe is simple to summarise, it does not mean that the generalisations are always easy to find or prove---sometimes only parts of the recipe can be applied. Some papers omit the logical counterpart completely and sometimes one gets a connection to a logic only by allowing infinitary connectives---this is the case when one cannot prove that the number of relevant bisimulation equivalence classes is finite. Particular examples utilising parts of the above recipe include \citep{BVVV25,pflueger2024recurrent,grau2025correspondence,soeteman2025logical,chen2025expressive}.

\vspace{2mm}
\noindent\textbf{Our contributions.}
In this paper, we introduce an abstract general model of graph neural networks that utilises aggregation over \emph{templates}. Templates are small patterns (i.e., graphs) that govern the form of message passing taking place in the GNN-computation. In standard AC-GNNs, messages are passed from a direct neighbour to another---hence the \emph{template} here is an edge. If on the other hand, messages are only allowed to be passed within triangles, the template would be a triangle.
Roughly speaking, aggregation in a template GNN using a template $T$ is over the multiset of ways to embed the template $T$ into the graph (having the node that aggregates mapped to a specified node in the template). A template GNN can have one or many templates that are used to aggregate simultaneously. For example, an AC-GNN that is enriched with triangle count information can be seen as a template GNN with two templates---an edge and a triangle.

After introducing  $\T$-GNNs, we define the corresponding notions of $\T$-WL-algorithm, graded $\T$-bisimulation, and graded template modal logic $\GML(\T)$.
We then establish that the expressivity of $\T$-GNNs is bounded by the graded $\T$-bisimulation, which also bounds the expressivity of $\GML(\T)$. Following the recipe of \cite{barcelo2020logical}, we show that every $\GML(\T)$-formula can be simulated by a $\T$-GNN. Finally, we establish a one-to-one correspondence between the uniform expressivity of $\T$-GNNs and $\GML(\T)$, in the bounded counting case introduced by \cite{grau2025correspondence} (here, in GNN-aggregation, multiplicities in multisets are capped with some constant).

Our main contribution is the formalisation of a general GNN-framework which can incorporate many of the modern paradigms for developing new expressive GNN-architectures.
Our main technical contribution, is a meta theorem yielding a family of theorems characterising the expressive power of GNNs. 
If one formalises their favourite GNN-model as a template GNN, our results give direct definitions for the corresponding WL-algorithm and logic that provably characterise the expressivity of the GNN model.

\section{Preliminaries}

\noindent\textbf{Graphs.} A \emph{labelled directed graph} is a tuple $G = (V, E, 
\lambda)$, where $V$ is a finite set of nodes,  $E \subseteq V \times V$ is the edge relation, and $\lambda : V \to \R^d$  assigns to each node a vector of real numbers of dimension $d$, for some fixed $d\in \N$. We call $d$ as the \emph{dimension} of $G$. $(G, v)$ is a pointed graph given a graph $G$ and a node $v \in V$.
Two pointed graphs are isomorphic (written as $(G, v) \cong (G', v')$) if there exists a bijection $f : V \to V'$ such that $f(v) = v'$, $\lambda(v) = \lambda'(f(v))$, for every $v\in V$, and $(v, u) \in E$ iff $(f(v), f(u)) \in E'$.
For a tuple $t=(a_1,\dots a_n)$, $t_i$ denotes its $i$th element $a_i$.
We identify assignments of type $\lambda : V \to \{0, 1\}^d$ with propositional assignments $\lambda' : V \to 2^{\{p_1,\dots,p_d\}}$, in the obvious way. That is, $p_i \in \lambda'(v)$ if and only if $\lambda(v)_i = 1$. Such labelled directed graphs can be treated as Kripke structures with the set of propositions $\{p_1,\dots,p_d\}$, and vice versa.

For a set $S$, we set $\Graphs{S}$ to be the class of labelled directed graphs that are labelled with elements from $S$.

\vspace{2mm}
\noindent\textbf{Graph classifiers and transformations.}
Let $S$ and $C$ be sets. A \emph{node $C$-classifier} for $\Graphs{S}$ is a function $f\colon \Graphs{S} \to \Graphs{C}$ that maintains the underlying graph structure (i.e., $G$ and $f(G)$ may differ only on $\lambda$). If $C=\{0,1\}$, this function is called a \emph{Boolean node classifier}. If $C=S$, we call this function as \emph{S-graph transformation}.

\vspace{2mm}
\noindent\textbf{GNN node classifiers.}  A standard aggregate-combine graph neural network (AC-GNN) consists of $L$ AC layers and a  classification function $\cls$, for $L\in\N$.
Intuitively, each GNN-layer of input dimension $d$ computes an $\R^d$-graph transformation, while the classification function computes a node $C$-classifier, for a finite set of classes $C$. The node classifier computed by the GNN is the composition of these functions.
Formally, an AC layer of input dimension $d$ is a pair $(\agg, \comb)$, where $\agg$ is an aggregation function mapping multisets of vectors of dimension $d$ to a vector of dimension $d$, and $\comb\colon \R^{2d} \to \R^d$ is a combination function mapping two vectors of dimension $d$ to a vector of dimension $d$.
At the $l$-th layer, the node label vector $\lambda^l(v)$ is updated via $comb\bigl(\lambda^{l-1}(v), \agg(\multiset{\lambda^{l-1}(u)}_{u \in N(v)})\bigr)$, where $N(v) = \{u \mid (v, u) \in E\}$ is the set of neighbours of $v$.
Each node $v \in V$ is classified according to a classification function $\cls\colon \R^d \to C$ applied to the final node label $\lambda^L(u)$. 
An application $\mathcal{N}(G, v)$ of an AC-GNN $\mathcal{N}$ to a pointed labelled graph $(G,v)$ is the value $\cls(\lambda^L(v))$. %

\vspace{2mm}
\noindent\textbf{Logical classifiers.} Consider a finite set of propositions $\AP$.\footnote{We restrict to finite $\AP$ to obtain a correspondence between $\AP$ and feature vectors of finite dimension $\lvert \AP \rvert$.} Formulae of basic modal logic (ML) are given by
\[
\varphi := p \mid \neg\varphi \mid  \varphi \wedge \varphi \mid \Diamond\varphi, \quad \text{where $p\in \AP$.}
\]
 Formulae are evaluated over pointed $\{0,1\}^{\lvert \AP\rvert}$-labelled graphs $(G,v)$ in the usual manner.
Graded modal logic (GML) extends ML with graded modalities $\Diamond^{\geq c}$, for positive integers $c$, where
\[
(G, v) \models \Diamond^{\geq c} \varphi \text{ iff } \lvert \{u \mid (v, u) \in E ~\text{and}~(G, u)\models \varphi\} \rvert \geq c.
\]
A (Boolean) \emph{GNN classifier $\mathcal{N}$ captures a logical classifier $\varphi$} if for every graph $G$ and node $v$ in $G$, it holds that $\mathcal{N}(G, v) = 1$ if and only if $(G, v) \models \varphi$.

\vspace{2mm}
\noindent\textbf{Weisfeiler-Leman (WL) algorithm.} The WL algorithm is an efficient heuristic originally introduced for checking graph isomorphism \citep{leman1968reduction}. Given a graph and a countably infinite set of colours $C$, the 1-dimensional WL (1-WL) test is a colour refinement algorithm that updates node colouring according to the following rule: $C^l(v) = \mathrm{HASH} \bigl(C^{l-1}(v), \multiset{ C^{l-1}(u)}_{u \in N(v)}\bigr)$, where $\mathrm{HASH}$ is assumed to be a perfect hash function. The algorithm terminates when the colouring is stable or the pre-specified number of iterations is exceeded. Two pointed graphs are said to be WL-indistinguishable if their final colour multisets match. It has been proven that GNNs and 1-WL have the same distinguishing power in the sense that a GNN can distinguish two nodes of a graph if and only if the colour refinement procedure assigns different colours to the two nodes \citep{xu2019powerful, morris2019weisfeiler}.

\section{Template GNNs}
In this section, we define our new model---template graph neural networks---and discuss how existing GNN-models can be interpreted as template GNNs. We start by introducing the notion of a template.

\begin{definition}[Template]
A \emph{template} is a structure $T = (V, E^+, E^-, r)$ such that
\begin{itemize}
	\item $V$ is a finite set of vertices,
	\item $E^+ \subseteq V \times V$ is a set of edges, where
    $(u, v) \in E^+$ is interpreted to mean that there is an edge from $u$ to $v$,
	\item $E^- \subseteq V \times V$ is a set of non-edges such that $E^+ \cap E^- = \emptyset$,
	\item $r \in V$ is a node called the root of the template.
\end{itemize}
Note that, we do not presume that $E^+ \cup E^- = V \times V$.
\end{definition}
A \emph{labelled template is a pair} $(T,\lambda)$, where  $T= (V, E^+, E^-, r)$ is a template and $\lambda:V \to \R^n$ is a labelling.

\begin{definition}[Template embedding]
A template embedding of $T = (V, E^+, E^-, r)$ into a pointed graph $(G,w) = (V_G, E_G,w)$ is an injective homomorphism $f\colon V \to V_G$ where $f(r)=w$ and for all $u, v \in V$ it holds that: 
\begin{enumerate}
	\item if $(u, v) \in E^+$ then $(f(u), f(v)) \in E_G$,
	\item if $(u, v) \in E^-$ then $(f(u), f(v)) \notin E_G$. 
\end{enumerate} 
Note that, if $E^+ \cup  E^- = V \times V$, a template embedding is simply an injective strong homomorphism from $(V, E^+, r)$ to $(G,w)$.
We  write $\emb(T, (G,w))$ for the set of all template embeddings of $T$ into $(G,w)$.
\end{definition}

A \emph{(multiset) aggregation function} is any function mapping multisets of real numbers to a real number. 

\begin{definition}[Template isomorphism]
Two labelled templates $(V_1, E_1^+, E_1^-, r_1,\lambda_1)$ and $(V_2, E_2^+, E_2^-, r_2, \lambda_2)$ are \emph{template isomorphic} if there is a bijection $f\colon V_1 \to V_2$ s.t. 
\begin{enumerate}
    \item $f(r_1) = r_2$,
	\item $(u, v) \in E_1^+$ if and only if $(f(u), f(v)) \in E_2^+$,
	\item $(u, v) \in E_1^-$ if and only if $(f(u), f(v)) \in E_2^-$,
    \item $\lambda_1(u)= \lambda_2(f(u))$, for every $u\in V_1\setminus\{r_1\}$.
\end{enumerate}
\end{definition}

\begin{definition}[Template aggregation function]
Let $T = (V, E^+, E^-, r)$ be a template and $d\in\N$. A function $\agg_T \colon (T,\lambda) \to \R^{d}$ that maps labelled $T$-templates to $\R^{d}$ is a \emph{$T$-aggregate function} if the function is invariant under template automorphisms. That is,
\(
\agg_T (T,\lambda_1) = \agg_T (T, \lambda_2),
\)
whenever $(T,\lambda_1)$ and $(T,\lambda_2)$ are template isomorphic.
\end{definition}

Intuitively, a template GNN is a message passing GNN, where messages are not passed via edges, but instead via template embeddings. For a fixed template $T$, the multiset of messages that a node $v$ receives, is the multiset of labelled graphs obtained via template embeddings. The multiset of labelled graphs is then transformed into a multiset of feature vectors by the template aggregate function. This multiset of feature vectors is then combined with the feature vector of $v$ as in standard AC-GNNs---via further aggregate/combine functions. The formal definition is given below.
\begin{definition}[Unary Template GNN]
Fix $L,d\in \N$ and a set of templates $\T$. A \emph{unary $L$-layer $\T$-GNN} (with feature dimension $d$) is a tuple
\[
\mathcal{N} = (\{\agg^l_{T^l}\}_{l\leq L}, \{\agg^l\}_{l\leq L}, \{\comb^l\}_{l\leq L},  \cls),
\]
where, for each layer $1\leq l\leq L$, $T^l\in \T$ is a template, $\agg^l_{T^l}$ is a $T^l$-aggregate function, $agg^l$ is an aggregate function, $\comb^l$ is a combination function, and $\cls : \R^d \to \{0, 1\} $ is a final Boolean classification function. 
At the $l$-th layer, the node feature vector (of dimension $d$) of a node $v$ is updated by 
\begin{multline*}
\lambda^l(v) \dfn \comb^l\big(\lambda^{l-1}(v),\\
\agg^l \multiset{\agg^l_{T^l} (T^l, \lambda^{l-1}_f) \mid f \in \emb(T^l, (G,v))}\big),
\end{multline*}
where $\lambda^{l-1}_f(u):= \lambda^{l-1}(f(u))$, for $u \in T^l$.
Finally, each node $v$ is classified by applying $\cls$ to the final node label $\lambda^L(u)$. 
When $\T=\{T\}$, we write $T$-GNN instead of $\T$-GNN. If $\GN$ is a $T$-GNN for some $\T$, we call it a \emph{template GNN}.
\end{definition}

We also consider template GNNs that aggregate over multiple templates simultaneously. We call these GNNs $n$-ary.

\begin{definition}[n-ary Template GNN]
Fix $L,d\in \N$ and a set of templates $\T$. An \emph{$n$-ary $L$-layer $\T$-GNN} (with feature dimension $d$) is a tuple
\[
\mathcal{N} = (\{\agg^l_{T_j^l}\}_{\substack{l\leq L \\ j\leq n}}, \{\agg_j^l\}_{\substack{l\leq L,\\  j\leq n}}, \{\comb^l\}_{l\leq L},  \cls),
\]
where, for each layer $1\leq l\leq L$ and $1\leq j\leq n$, $T_j^l\in \T$ is a template, $\agg^l_{T_j^l}$ is a $T_j^l$-aggregate function, $\agg_j^l$ is an aggregate function, $\comb^l$ is a combination function, and $\cls : \R^d \to \{0, 1\} $ is a final Boolean classification function.
At the $l$-th layer, the feature vector of node $v$ is updated by 
\begin{multline*}
\lambda^l(v) \dfn \comb^l\big(\lambda^{l-1}(v),\\
\agg_1^l \multiset{\agg^l_{T_1^l} (T_1^l, \lambda^{l-1}_f) \mid f \in \emb(T_1^l, (G,v))},\\
\vdots
\\
\agg_n^l \multiset{\agg^l_{T_n^l} (T_n^l, \lambda^{l-1}_f) \mid f \in \emb(T_n^l, (G,v))}\big),
\end{multline*}
where $\lambda^{l-1}_f$ is defined as in the unary case.
\end{definition}

For $c\geq 1$, we call a function $f$ whose parameters are (tuples of) multisets \emph{$c$-bounded}, if  $f(A)=f(A\upharpoonright c)$, where $A\upharpoonright c$ is obtained from $A$ by changing multiplicities larger than $c$ to $c$.
We call a $\T$-GNN $\GN$ \emph{$c$-bounded}, if all its outer aggregate functions\footnote{Template aggregate functions are trivially $c$-bounded, since their input is a single labelled graph.} are c-bounded, and we call  $\GN$ \emph{bounded}, if it is $c$-bounded for some $c$. For multisets $A$ and $B$ and $c\geq 1$, we write $A=_{ c}B$ if $A\upharpoonright c = B\upharpoonright c$.

\vspace{2mm}
\noindent\textbf{Families of GNNs interpreted as template GNNs.}
A standard AC-GNN (see e.g. \cite{barcelo2020logical}) can be interpreted as a unary template GNN with the template $T_1$, as shown in Figure \ref{fig:templates}(\subref{fig:t1}), which has vertex set  $V = \{r_1, a\}$, edge set $E_1^+ = \{(r_1, a)\}$, and non-edge set $E_1^- = \emptyset$. The number of valid template embeddings at node $v$ is the same as the number of neighbouring nodes of $v$. When $\agg_T$ 
projects to the feature vector of $a$ and the outer aggregation function is the same as the one used in the AC-GNN, the update rules of node feature vectors for $T$-GNNs and AC-GNNs coincide.

\cite{grau2025correspondence} introduced the family of bounded GNNs, where aggregation (and readout) functions are restricted to bounded functions.  The paper established that bounded GNNs with AC+ layers (defined below) have the same expressive power as the two variable fragment of first-order logic with counting quantifiers. For an AC+ layer, the node feature vectors $\lambda^l(v)$ is updated via
\begin{multline*}
\comb\big(\lambda^{l-1}(v),\\
\agg\multiset{\lambda^{l-1}{(u)}}_{u \in N(v)}, \agg\multiset{\lambda^{l-1}{(u)}}_{u \in \bar{N}(v)}),
\end{multline*}
where $\bar{N}(v) = \{u \mid (v, u) \notin E\} \setminus \{v\}$ is the set of non-neighbours of $v$ excluding $v$ itself. Bounded AC+-GNNs can be interpreted as a binary template GNNs with $\T = \{T_1, T_2\}$ for each layer, where $T_1$ is the same as above, and $T_2$ as shown in Figure \ref{fig:templates}(\subref{fig:t2}), has vertex set $V = \{r_2, a\}$, edge set $E_2^+ = \emptyset$, and non-edge set $E_2^- = \{(r_2, a)\}$. When $\agg_{T_1}$ and $\agg_{T_2}$  project to the feature vector of $a$, and the outer aggregation functions for $T_1$ and $T_2$ match the two aggregation functions used in AC+ layers, the update rules of node feature vectors for $\T$-GNNs and AC+-GNNs coincide.

\begin{figure}[t]
	\centering 
	\begin{subfigure}[b]{0.48\columnwidth}
		\centering
		\begin{tikzpicture}[scale=0.3] 
			\draw[->] (0,0) -- (1.4,0);
			\draw[fill] (0,0) circle (3pt) node[left] {$r_1$};
			\draw[fill] (1.5,0) circle (3pt) node[right] {$a$};
		\end{tikzpicture}
		\caption{$T_1$}
		\label{fig:t1}
	\end{subfigure}
	\begin{subfigure}[b]{0.48\columnwidth}
		\centering
		\begin{tikzpicture}[scale=0.3] 
			\draw[->,dashed] (0,0) -- (1.4,0);
			\draw[fill] (0,0) circle (3pt) node[left] {$r_2$};
			\draw[fill] (1.5,0) circle (3pt) node[right] {$a$};
		\end{tikzpicture}
		\caption{$T_2$}
		\label{fig:t2}
	\end{subfigure}
	\caption{Templates $T_1$ and $T_2$ used to capture AC-GNNs and AC+-GNNs. The solid arrow represents an element of $E^+$, while the dashed arrow represents an element of $E^-$.}
	\label{fig:templates}
\end{figure}

Another line of work that can be directly interpreted as $\T$-GNNs are the ones that enrich the standard GNNs with local graph structural information. For instance, \cite{chen2025expressive} proposed $k$-hop subgraph GNNs, where every node is updated by incorporating information extracted from subgraphs induced by the set of nodes reachable within $k$ steps from the node. 
Specifically, the k-hop neighbourhood of a node $v$ can be defined as $N_k(v) \dfn \{u \mid d(v, u) \le k\}$ where $d(v, u)$ is the shortest path distance between $v$ and $u$. We write $G_v^k$ for the k-hop subgraph structure rooted at $v$. Instead of aggregating over immediate neighbourhood as in standard GNNs, $k$-hop subgraph GNNs aggregate over $(G_{v}^k, \multiset{\lambda(u)^{(l-1)}}_{u \in N_k(v)})$ when updating the node feature vectors.
$k$-hop subgraph GNNs can be interpreted as $\T$-GNNs where $\T$ is the set of all rooted graphs of radius $k$. The radius of a template $T$ is defined by $rd(T) \coloneqq \max\limits_{v \in V} d(r, v)$,  where $d(r, v)$ is the shortest $E^+$-path distance from $r$ to $v$. If there is no $E^+$-path from $r$ to $v$, we set $d(r, v) := \infty$. 
We provide a concrete example in the next section.

\section{Weisfeiler-Leman Test and Bisimulation}
Next, we define the new notions of template WL-algorithm and graded template bisimulation, and show that the notions coincide and bound the expressive power of template GNNs.
We also discuss how existing WL-variants can be interpreted in our framework.

Fix a countable set of colours C. A node colouring of a labelled graph $G = (V, E, \lambda)$ is a function $\col\colon V \to C$.
Intuitively, template WL algorithm generalises the standard 1-WL algorithm by replacing, in the colour refinement rounds, the multisets of colours of neighbours of a given node by multisets of coloured graphs obtained via template embeddings (or tuples of multisets, when several templates are used).
\begin{definition}[$\T$-WL algorithm]\label{def:TWL}
An $L$-round $\T$-WL algorithm takes as an input a labelled graph $G = (V, E, \lambda)$, a finite set of templates $\T=\{T_1,\dots,T_n\}$, and $L \in \N$. Initial node colours are given by node labels of the graph, $\col^0(v) \dfn \hash(\lambda(v))$ for every $v \in V$. For $1 \le l \le L$, node colours are repeatedly refined at each round by 
	\begin{multline*}
		\col^l(v) \dfn \hash\big(\col^{l-1}(v),\\
		\multiset{(T_1, \col^{l-1}_f) \mid f \in \emb(T_1, (G,v))},\\
		\vdots 
		\\
		\multiset{(T_n, \col^{l-1}_f) \mid f \in \emb(T_n, (G,v))}\big),
	\end{multline*}
where $\col^{l-1}_f(u):= \col^{l-1}(f(u))$, for each $u \in T_i$. The procedure terminates after $L$ rounds, and $\col^L(v)$ is the final node colour. Here $\mathrm{HASH}$ is assumed to be an injective function with co-domain $C$.
\end{definition}

When the $\T$-WL algorithm is run on a pair of graphs, it is run with the same injective $\hash$ function.

The standard 1-WL algorithm updates a node's colour based on its own label and the multiset of its neighbours' colours. $\T$-WL with a single edge template, as shown in Figure \ref{fig:templates}(\subref{fig:t1}), is equivalent to 1-WL. The update rule $\col^l(v) \dfn \hash\big(\col^{l-1}(v), \multiset{(T, \col^{l-1}_f) \mid f \in \emb(T, (G,v))}$ is informationally equivalent to the standard 1-WL update rule: $\col^l(v) \dfn \hash\big(\col^{l-1}(v), \multiset{ (\col^{l-1}(u)) \mid u \in N(v) } \big)$.

To increase the expressive power of standard GNNs (from 1-WL), \cite{chen2025expressive} proposed $k$-hop subgraph GNNs and the corresponding $k$-hop subgraph WL algorithm. As discussed in the previous section, $k$-hop subgraph GNNs can be interpreted as $\T$-GNNs, when $\T$ is the set of all rooted graphs of radius $k$.
Using the same set of templates, $\T$-WL algorithm corresponds to the $k$-hop subgraph WL algorithm of \citeauthor{chen2025expressive}.
For illustrative purposes, the following example shows how 2-hop subgraph WL and $\T$-WL achieve the same result, on a pair of non-isomorphic graphs that cannot be distinguished by standard 1-WL.

\begin{example}
Consider a triangle template $T_{\vartriangle}$ with vertex set $V = \{r, a, b\}$, edge set $E^+ = \{(r, a), (a, b), (b, r)\}$, and non-edge set $E^- = \emptyset$, and a path template $T_p$ with vertex set $V = \{r, a, b\}$, edge set $E^+ = \{(r, a), (a, b)\}$, and non-edge set $E^- = \{(b, r)\}$. Figure \ref{t-wl} shows how $2$-hop subgraph WL and $\T$-WL with $\T = \{T_{\vartriangle}, T_p\}$ distinguish a pair of non-isomorphic graphs, respectively.
\end{example}

	\begin{figure}[t]
		\centering
        \scalebox{0.85}{
		\begin{tikzpicture}[
			node distance=1cm,
			vnode/.style={circle, draw=black, thin, minimum size=4.5mm, inner sep=0pt, font=\bfseries\scriptsize},
			subgraph_node/.style={circle, draw=black, thin, minimum size=3mm, inner sep=0pt, font=\tiny},
			template_node/.style={circle, fill=black, draw=none, minimum size=3pt, inner sep=0pt},
			label_text/.style={font=\sffamily\scriptsize, align=center},
			header/.style={font=\bfseries\small},
			diredge/.style={draw, semithick, -{Stealth[length=1.2mm, width=0.9mm]}},
			mini_edge/.style={draw, thin, -{Stealth[length=0.8mm, width=0.6mm]}},
			extract_arrow/.style={->, >=stealth, dashed, color=gray, thick}
			]
			
			\node[header] at (4, 5.2) {2-hop subgraph WL};

			\node[label_text] at (2, 4.6) {\textbf{Induced subgraph}};
			\begin{scope}[shift={(2, 3.8)}]
				\node[subgraph_node, fill=red!10] (e1_1) at (90:0.4) {};
				\node[subgraph_node, fill=red!10] (e1_2) at (210:0.4) {};
				\node[subgraph_node, fill=red!10] (e1_3) at (330:0.4) {};
				\draw[mini_edge] (e1_1) -- (e1_2);
				\draw[mini_edge] (e1_2) -- (e1_3);
				\draw[mini_edge] (e1_3) -- (e1_1); 
				\node[font=\tiny, below=0.1cm] at (0, -0.2) {};
			\end{scope}

			\node[label_text] at (6, 4.6) {\textbf{Induced subgraph}};
			\begin{scope}[shift={(6, 3.8)}]
				\node[subgraph_node, fill=orange!10] (e2_1) at (90:0.4) {}; 
				\node[subgraph_node, fill=orange!10] (e2_2) at (330:0.4) {};
				\node[subgraph_node, fill=orange!10] (e2_3) at (210:0.4) {};
				\draw[mini_edge] (e2_1) -- (e2_2);
				\draw[mini_edge] (e2_2) -- (e2_3);
				\node[font=\tiny, below=0.1cm] at (0, -0.2) {};
			\end{scope}

			\node[label_text] at (0.2, 2.2) {$G_1$};

			\begin{scope}[shift={(1.1, 2.0)}]
				\node[vnode, fill=red!30] (t1_1) at (90:0.55) {$v$};
				\node[vnode, fill=red!30] (t1_2) at (210:0.55) {};
				\node[vnode, fill=red!30] (t1_3) at (330:0.55) {};
				\draw[diredge] (t1_1) -- (t1_2);
				\draw[diredge] (t1_2) -- (t1_3);
				\draw[diredge] (t1_3) -- (t1_1);
			\end{scope}

			\begin{scope}[shift={(2.9, 2.0)}]
				\node[vnode, fill=red!30] (t2_1) at (90:0.55) {};
				\node[vnode, fill=red!30] (t2_2) at (210:0.55) {};
				\node[vnode, fill=red!30] (t2_3) at (330:0.55) {};
				\draw[diredge] (t2_1) -- (t2_2);
				\draw[diredge] (t2_2) -- (t2_3);
				\draw[diredge] (t2_3) -- (t2_1);
			\end{scope}

			\node[label_text] at (4.5, 2.2) {$G_2$};

			\begin{scope}[shift={(6.0, 2.0)}]
				\foreach \i in {1,...,6} {
					\node[vnode, fill=orange!30] (h\i) at ({90-(\i-1)*60}:0.85) {}; 
				}
				\node[font=\bfseries\scriptsize] at (h1) {$v$}; 
				\draw[diredge] (h1) -- (h2);
				\draw[diredge] (h2) -- (h3);
				\draw[diredge] (h3) -- (h4);
				\draw[diredge] (h4) -- (h5);
				\draw[diredge] (h5) -- (h6);
				\draw[diredge] (h6) -- (h1);
			\end{scope}
			
			\draw[extract_arrow] (t1_1) to[out=110, in=180] (e1_1);
			\draw[extract_arrow] (h1) to[out=165, in=180] (e2_1);

			\draw[dashed, thin, gray!80] (0, 0.8) -- (8, 0.8);

			\node[header] at (4, 0.3) {$\mathcal{T}$-WL};

			\node[label_text] at (2, -0.4) {\textbf{$T_{\vartriangle}$}};
			\begin{scope}[shift={(2, -1.2)}]
				\node[template_node] (tmp1_1) at (90:0.4) {};
				\node[template_node] (tmp1_2) at (210:0.4) {};
				\node[template_node] (tmp1_3) at (330:0.4) {};
				\draw[mini_edge] (tmp1_1) -- (tmp1_2);
				\draw[mini_edge] (tmp1_2) -- (tmp1_3);
				\draw[mini_edge] (tmp1_3) -- (tmp1_1);
			\end{scope}

			\node[label_text] at (6, -0.4) {\textbf{$T_p$}};
			\begin{scope}[shift={(6, -1.2)}]
				\node[template_node] (tmp2_1) at (90:0.4) {}; 
				\node[template_node] (tmp2_2) at (330:0.4) {};
				\node[template_node] (tmp2_3) at (210:0.4) {};
				\draw[mini_edge] (tmp2_1) -- (tmp2_2);
				\draw[mini_edge] (tmp2_2) -- (tmp2_3);
                \draw[mini_edge, dashed] (tmp2_3) -- (tmp2_1);
			\end{scope}

			\node[label_text] at (0.2, -2.8) {$G_1$};

			\begin{scope}[shift={(1.1, -3.0)}]
				\node[vnode, fill=red!30] (bt1_1) at (90:0.55) {$v$};
				\node[vnode, fill=red!30] (bt1_2) at (210:0.55) {};
				\node[vnode, fill=red!30] (bt1_3) at (330:0.55) {};
				\draw[diredge] (bt1_1) -- (bt1_2);
				\draw[diredge] (bt1_2) -- (bt1_3);
				\draw[diredge] (bt1_3) -- (bt1_1);
			\end{scope}

			\begin{scope}[shift={(2.9, -3.0)}]
				\node[vnode, fill=red!30] (bt2_1) at (90:0.55) {};
				\node[vnode, fill=red!30] (bt2_2) at (210:0.55) {};
				\node[vnode, fill=red!30] (bt2_3) at (330:0.55) {};
				\draw[diredge] (bt2_1) -- (bt2_2);
				\draw[diredge] (bt2_2) -- (bt2_3);
				\draw[diredge] (bt2_3) -- (bt2_1);
			\end{scope}

			\node[label_text] at (4.5, -2.8) {$G_2$};

			\begin{scope}[shift={(6.0, -3.0)}]
				\foreach \i in {1,...,6} {
					\node[vnode, fill=orange!30] (bh\i) at ({90-(\i-1)*60}:0.85) {};
				}
				\node[font=\bfseries\scriptsize] at (bh1) {$v$};
				\draw[diredge] (bh1) -- (bh2);
				\draw[diredge] (bh2) -- (bh3);
				\draw[diredge] (bh3) -- (bh4);
				\draw[diredge] (bh4) -- (bh5);
				\draw[diredge] (bh5) -- (bh6);
				\draw[diredge] (bh6) -- (bh1);
			\end{scope}
			
			\draw[extract_arrow] (bt1_1) to[out=110, in=180] (tmp1_1);
			\draw[extract_arrow] (bh1) to[out=165, in=180] (tmp2_1);
		\end{tikzpicture}
        }
		\caption{Top: $2$-hop subgraph WL extracts subgraphs rooted at node $v$. Bottom: $\mathcal{T}$-WL arrives at the same colouring by matching node $v$ against $\T = \{T_{\vartriangle}, T_p\}$.}
		\label{t-wl}
	\end{figure}
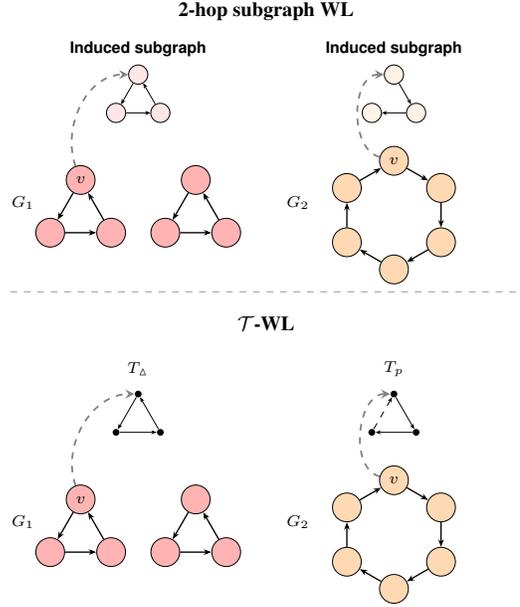

Next, we introduce the notion of graded template bisimulation and connect it to the $\T$-WL algorithm.
\begin{definition}[Graded $\T$-bisimulation] 
Let $G = (V, E, \lambda)$ and $G' = (V', E', \lambda')$ be labelled graphs and $\T$ a set of templates. A relation $Z_0 \subseteq V \times V'$ is a graded $0$-$\T$-bisimulation if for every $(v, v') \in Z_0$, $\lambda(v) = \lambda'(v')$. For $l \ge 1$, a relation $Z_l \subseteq V \times V'$ is a graded $l$-$\T$-bisimulation if there exists a graded ($l$-$1$)-$\T$-bisimulation $Z_{l-1}$  s.t.  for every $(v, v') \in Z_l$:
\begin{enumerate}
\item $(v, v') \in Z_{l-1}$,
\item for every $T \in \T$, and every $k\geq 1$: \label{item:Tbis}
\begin{itemize}
\item for pairwise distinct $f_1,\dots f_k\in \emb(T, (G, v))$ there exists pairwise distinct $f'_1,\dots f'_k\in \emb(T, (G', v'))$ s.t. for all $1\leq i\leq k$, $u \in T$, $(f_i(u), f_i'(u)) \in Z_{l-1}$,
\item for pairwise distinct $f'_1,\dots f'_k\in \emb(T, (G', v'))$ there exists pairwise distinct $f_1,\dots f_k\in \emb(T, (G, v))$  s.t. for all $1\leq i\leq k$, $u \in T$, $(f_i(u), f_i'(u)) \in Z_{l-1}$.
\end{itemize}

\end{enumerate} 
We say that $(v,v')$ are \emph{graded $l$-$\T$-bisimilar}---and write $(G',v') \sim^{l}_{\T} (G,v)$---if there is a graded $l$-$\T$-bisimulation $Z_l$ such that $(v,v')\in Z_l$.
For a counting bound $c \ge 1$, we define $l$-$c$-$\T$-bisimulation by 
restricting $k\leq c$ in item \ref{item:Tbis}, and write $(G',v') \sim^{l,c}_{\T} (G,v)$ when $v$ and $v'$ are $l$-$c$-$\T$-bisimilar.

We obtain the notion of \emph{graded $\T$-bisimulation} (\emph{graded $\T$-bisimilarity}, resp.) from the above definitions in the obvious way by replacing all occurrences of $Z_0$, $Z_l$, and $Z_{l-1}$ by $Z$. For the cases of graded $\T$-bisimulation and graded $l$-$\T$-bisimulation, we may the use the following alternative formulation of item \ref{item:Tbis}:
\begin{enumerate}
\item[2'.] for every template $T \in \T$, there exists a bijection $g: \emb(T, (G, v)) \to \emb(T, (G', v'))$ such that for all $f \in \emb(T, (G, v))$, for all $u \in T$, $(f(u), g(f)(u)) \in Z_{l-1}$.
\end{enumerate}
\end{definition}

$\T$-WL algorithm and graded $\T$-bisimulation are different sides of the same coin, as the following proposition shows.
\begin{proposition}\label{prop:WL-bis}
Let $G = (V, E, \lambda)$ and $G' = (V', E', \lambda')$ be labelled graphs, $\T$ a finite set of templates, and $l \in \N$.
Then $(G',v') \sim^{l}_{\T} (G,v)$ if and only if $\col^l(v) = \col^l(v')$.
\end{proposition}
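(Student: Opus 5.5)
We argue by induction on $l$, running the $\T$-WL algorithm on the disjoint union of $G$ and $G'$ with a common injective $\hash$. The inductive claim is: the relation $Z_l \dfn \{(v,v') \in V\times V' \mid \col^l(v)=\col^l(v')\}$ is a graded $l$-$\T$-bisimulation, and conversely every graded $l$-$\T$-bisimulation is contained in $Z_l$. Together these give the proposition.

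\textbf{Base case and monotonicity.} For $l=0$ both statements are immediate: $\hash$ is injective, so $\col^0(v)=\col^0(v')$ holds iff $\lambda(v)=\lambda'(v')$, which is exactly the condition defining graded $0$-$\T$-bisimulations. For the step we also need that colours refine. Since $\col^{l}(v)$ is $\hash$ applied to a tuple whose first entry is $\col^{l-1}(v)$, injectivity gives $\col^l(v)=\col^l(v') \Rightarrow \col^{l-1}(v)=\col^{l-1}(v')$. Hence $Z_l\subseteq Z_{l-1}$, which settles item 1 of the definition.

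\textbf{Inductive step, colours to bisimulation.} Suppose $\col^l(v)=\col^l(v')$. By injectivity of $\hash$, for each $T\in\T$ we get
\[
\multiset{(T,\col^{l-1}_f) \mid f\in\emb(T,(G,v))} = \multiset{(T,\col^{l-1}_{f'}) \mid f'\in\emb(T,(G',v'))}.
\]
Equality of these multisets gives a bijection $g$ between the embedding sets with $\col^{l-1}_f=\col^{l-1}_{g(f)}$, i.e.\ $\col^{l-1}(f(u))=\col^{l-1}(g(f)(u))$ for every $u\in T$. So $(f(u),g(f)(u))\in Z_{l-1}$, and $Z_{l-1}$ is a graded $(l-1)$-$\T$-bisimulation by the induction hypothesis. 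This is condition 2'. We then check that 2' implies item 2: given pairwise distinct $f_1,\dots,f_k$, the images $g(f_1),\dots,g(f_k)$ are pairwise distinct, and symmetrically with $g^{-1}$.

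\textbf{Inductive step, bisimulation to colours.} Conversely, let $Z_l$ be any graded $l$-$\T$-bisimulation witnessed by $Z_{l-1}$, and let $(v,v')\in Z_l$. By induction $Z_{l-1}$ is contained in equality of $\col^{l-1}$, so $\col^{l-1}(v)=\col^{l-1}(v')$. The main obstacle is extracting a colour-preserving bijection from item 2, which only provides ``for all $k$'' injective matchings and no bijection directly. We handle this by counting within colour classes. Fix $T$ and a colouring $c\colon V_T\to C$. Let $A_c$ be the set of $f\in\emb(T,(G,v))$ with $\col^{l-1}_f=c$, and $A'_c$ the analogous set in $G'$.

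Apply item 2 with $k=|A_c|$ to the elements of $A_c$. This yields distinct $f'_i$ such that $(f_i(u),f'_i(u))\in Z_{l-1}$ for all $u$. Since $Z_{l-1}$ is contained in colour equality, $\col^{l-1}_{f'_i}=c$, so each $f'_i$ lies in $A'_c$. Thus $|A'_c|\ge|A_c|$, and the symmetric clause gives the reverse inequality. Equal counts for every $c$ mean the two multisets above coincide for each $T$, and therefore $\col^l(v)=\col^l(v')$.

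Finally, $(G',v')\sim^l_\T(G,v)$ means some graded $l$-$\T$-bisimulation contains $(v,v')$, which by the converse direction forces equal colours. Equal colours give membership in the relation $Z_l$ of the first direction, which is itself a graded $l$-$\T$-bisimulation. This proves the proposition.
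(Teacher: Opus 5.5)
Your proof is correct and is in substance the paper's induction: injectivity of $\hash$, equal multisets versus a colour-preserving bijection of embeddings, and the induction hypothesis at level $l-1$. It is, however, more careful at two points the paper passes over.

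First, your strengthened hypothesis (colour equality at level $l$ is itself a graded $l$-$\T$-bisimulation and contains every such bisimulation) supplies the single witness relation $Z_{l-1}$ that the definition demands. The paper simply writes $(v,v')\in Z_{l-1}$ from the induction hypothesis, tacitly using that $\sim^{l-1}_{\T}$ is itself a graded $(l-1)$-$\T$-bisimulation.

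Second, in the bisimulation-to-colours direction the paper reads off a bijection from the alternative clause 2', whose equivalence with item 2 is asserted but not proved. You instead count within colour classes using item 2 only, and you verify the easy implication from 2' to item 2 in the other direction. The counting has a real payoff: applying item 2 with $k=\min(c,\lvert A_c\rvert)$ gives $\min(c,\lvert A_c\rvert)=\min(c,\lvert A'_c\rvert)$. So this direction carries over verbatim to the $c$-bounded variant claimed after the proposition, where no bijection need exist. The obstacle you mention is milder than you suggest, though. On finite graphs, item 2 with $k=\lvert\emb(T,(G,v))\rvert$, together with its back clause, already yields a $Z_{l-1}$-respecting bijection.

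One sentence must go: do not run $\T$-WL on the disjoint union $G\sqcup G'$. For templates with non-edges, or with vertices not $E^+$-connected to the root (such as $T_2$), embeddings into the union may leave the root's own graph, and the statement then fails. Take $G$ to be a single node $v$ and $G'$ two non-adjacent nodes $v',w'$, all with the same label, and $\T=\{T_2\}$. In $G\sqcup G'$, both $v$ and $v'$ have exactly two $T_2$-embeddings with identical colourings, hence the same round-$1$ colour. Yet $\lvert\emb(T_2,(G,v))\rvert=0\neq 1=\lvert\emb(T_2,(G',v'))\rvert$, which rules out graded $1$-$\T$-bisimilarity. Your argument actually works with $\emb(T,(G,v))$ and $\emb(T,(G',v'))$ throughout, i.e.\ with separate runs sharing one $\hash$, which is the paper's convention. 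It is sound once the opening sentence says that instead.
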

\begin{proof}
The proof is by induction on $l$. 
For the base case, by definition of initial colouring, $\col^0(v) = \hash(\lambda(v))$ and $\col^0(v') = \hash(\lambda'(v'))$. Since the hash function is injective, $\col^0(v) = \col^0(v')$ if and only if $\lambda(v) = \lambda'(v')$. By definition of graded $0$-$\T$-bisimulation, $(v, v') \in Z_0$ if and only if $\lambda(v) = \lambda'(v')$.
For the inductive step, assume the proposition holds for $l-1$.

($\Leftarrow$):
Since the hash function is injective, $\col^l(v) = \col^l(v')$ holds if and only if the arguments to the hash function are identical, as follows
\begin{enumerate}
	\item $\col^{l-1}(v) = \col^{l-1}(v')$,
	\item for every $T \in \T$, $\multiset{(T, \col^{l-1}_f) \mid f \in \emb(T, (G, v))} = \multiset{(T, \col^{l-1}_{f'}) \mid f' \in \emb(T, (G',v'))}$
\end{enumerate}
By induction hypothesis, 1.  implies $(v, v') \in Z_{l-1}$. The multisets are equal in 2. implies for every $T \in \T$, there exists a bijection $g: \emb(T, (G, v)) \to \emb(T, (G', v'))$ such that for all $f \in \emb(T, (G, v))$, $(T, \col^{l-1}_f) = (T, \col^{l-1}_{g(f)})$, which implies 
$\col^{l-1}_f = \col^{l-1}_{g(f)}$, which means for all $u \in T$, $\col^{l-1}(f(u)) = \col^{l-1}(g(f)(u))$ by definition. By induction hypothesis, it follows that for all $u \in T$, $(f(u), g(f)(u)) \in Z_{l-1}$. Since $\lambda(v) = \lambda'(v')$ is implied by initial colouring, $(v, v') \in Z_{l-1}$ is implied by 1., and the existence of bijection $g$ for all $T \in \T$ is implied by 2., the definition of graded $l$-$\T$-bisimulation is satisfied. Therefore, $(v, v') \in Z_l$.

($\Rightarrow$): Assume $(v, v') \in Z_l$. Then $(v, v') \in Z_{l-1}$ by definition. By induction hypothesis, $(v, v') \in Z_{l-1}$ implies $\col^{l-1}(v) = \col^{l-1}(v')$, matching the first argument of the hash function. Since $(v, v') \in Z_l$, for every $T \in \T$, there exists a bijection $g: \emb(T, (G, v)) \to \emb(T, (G', v'))$ such that for all $f \in \emb$, for all $u \in T$, $(f(u), g(f)(u)) \in Z_{l-1}$ by definition. Applying induction hypothesis to all pairs $(f(u), g(f)(u)) \in Z_{l-1}$, we have $\col^{l-1}(f(u)) = \col^{l-1}(g(f)(u))$, which means $\col^{l-1}_f = \col^{l-1}_{g(f)}$. Since for every $T \in \T$ a bijection $g$ exists mapping $f$ to $g(f)$, $\multiset{(T, \col^{l-1}_f) \mid f \in \emb(T, (G, v))} = \multiset{(T, \col^{l-1}_{f'}) \mid f' \in \emb(T, (G', v'))}$, matching the other arguments of the hash function. Therefore, $\col^l(v) = \col^l(v')$.
\end{proof}
It is not difficult too see that the above proof gives an analogous connection between the bounded variants of $\T$-WL-algorithm and bounded $\T$-bisimulation. For $c\geq 1$, a $c$-bounded $\T$-WL algorithm is obtained from Definition \ref{def:TWL} by stipulating that $\hash$ cannot differentiate between multiplicities $c$ from larger multiplicities (i.e., $\hash$ is a $c$-bounded function). Hence, Proposition \ref{prop:WL-bis} can be reformulated to state that $(v, v')$ are $l$-$c$-$\T$-bisimilar if and only if $\col^l(v) = \col^l(v')$ using a $c$-bounded $\hash$-function. 

\begin{proposition}\label{prop:GNNbisimulation}
	Let $G$ and $G'$ be labelled graphs, $\T$ be a finite set of templates, $\mathcal{N}$ a $c$-bounded $L$-layer $\T$-GNN, and $l\leq L$. If $(v, v') \in V \times V'$ are $l$-$c$-$\T$-bisimilar, then they have the same feature vectors at the $l$-th layer of $\mathcal{N}$.
    The result remains true also for non-bounded $\T$-GNN, if $l$-$c$-$\T$-bisimilarity is replaced with graded $l$-$\T$-bisimilarity.
\end{proposition}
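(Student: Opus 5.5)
We argue by induction on $l$, proving the stronger statement that for every $l\le L$ and every pair $(v,v')$ that is $l$-$c$-$\T$-bisimilar we have $\lambda^l(v)=\lambda'^l(v')$. For $l=0$ this is immediate: a graded $0$-$\T$-bisimulation only relates nodes with $\lambda(v)=\lambda'(v')$, and $\lambda^0=\lambda$. For the inductive step, fix a witnessing $l$-$c$-$\T$-bisimulation $Z_l\ni(v,v')$ together with its underlying $(l-1)$-bisimulation $Z_{l-1}$. Every pair in $Z_{l-1}$ is $(l-1)$-$c$-$\T$-bisimilar, so the induction hypothesis gives $\lambda^{l-1}(x)=\lambda'^{l-1}(x')$ for all $(x,x')\in Z_{l-1}$. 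In particular $\lambda^{l-1}(v)=\lambda'^{l-1}(v')$, because $(v,v')\in Z_{l-1}$. This settles the first argument of $\comb^l$.

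The remaining arguments are the outer aggregations, one for each template $T=T^l_j$. Consider the multisets
\[
A=\multiset{\agg^l_{T}(T,\lambda^{l-1}_f)\mid f\in\emb(T,(G,v))},\qquad A'=\multiset{\agg^l_{T}(T,\lambda'^{l-1}_{f'})\mid f'\in\emb(T,(G',v'))}.
\]
Since $\agg^l_j$ is $c$-bounded, it suffices to show $A=_{c}A'$. Take any value $x$, and suppose it occurs in $A$ with multiplicity at least $k$, where $k\le c$. Then there are pairwise distinct embeddings $f_1,\dots,f_k$ producing $x$. Item \ref{item:Tbis}, with $k\le c$, gives pairwise distinct $f'_1,\dots,f'_k$ such that $(f_i(u),f'_i(u))\in Z_{l-1}$ for all $u\in T$. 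By the induction hypothesis, $\lambda^{l-1}_{f_i}=\lambda'^{l-1}_{f'_i}$ as labellings of $T$. The labelled templates $(T,\lambda^{l-1}_{f_i})$ and $(T,\lambda'^{l-1}_{f'_i})$ are therefore identical, hence trivially template isomorphic via the identity map, so $\agg^l_T$ returns $x$ on both. Thus $x$ has multiplicity at least $k$ in $A'$. The symmetric clause of item \ref{item:Tbis} gives the converse. Hence $\min(\text{mult}_A(x),c)=\min(\text{mult}_{A'}(x),c)$ for every $x$, that is, $A=_{c}A'$, and so $\agg^l_j(A)=\agg^l_j(A')$.

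With all arguments of $\comb^l$ equal, we get $\lambda^l(v)=\lambda'^l(v')$. The unary case is the special case $n=1$.

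For the non-bounded version we replace item \ref{item:Tbis} by the equivalent bijection formulation 2'. For each $T$ this gives a bijection $g$ with $\lambda^{l-1}_f=\lambda'^{l-1}_{g(f)}$. Consequently the multisets $A$ and $A'$ coincide exactly, and no boundedness of $\agg^l_j$ is needed.

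The main obstacle is the multiplicity bookkeeping. The bisimulation compares distinct embeddings, not distinct aggregated values, so we must count embeddings that yield a fixed value $x$. Several embeddings can yield the same $x$, and the argument has to go through the preimage of each value. The identity observation above handles this: matched embeddings induce identical labellings of $T$, so they yield the same $\agg^l_T$-value.
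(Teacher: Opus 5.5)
Your proof is correct and follows the paper's argument: induction on $l$, the induction hypothesis applied to pairs in $Z_{l-1}$, the forth and back clauses with $k\le c$ to get $=_c$-equality of the multisets fed to each outer aggregation, and the bijection form 2' for the unbounded case. The only difference is that you count multiplicities of each aggregated value directly rather than of the labelled templates $(T,\lambda^{l-1}_f)$. This makes explicit a step the paper leaves implicit, namely that $=_c$-equality survives applying $\agg^l_T$ pointwise.
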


\begin{proof}
The proof is by induction on $l$. For the base case,	by definition of $0$-$c$-$\T$-bisimulation, $(v, v') \in Z_0$ if and only if $\lambda(v) = \lambda(v')$. $\T$-GNN $\GN$ initializes node feature vectors based on the initial node labels $\lambda(v)$ and $\lambda(v')$ of the graphs. For the inductive step, assume the statement holds for $l-1$.

Consider the update rule for a $c$-bounded $\T$-GNN classifier. Independent of choices of $\agg$, $\agg_T$, and $\comb$ functions, $\lambda^l(v) = \lambda^l(v')$ holds if 
\begin{enumerate}
	\item $\lambda^{l-1}(v) = \lambda^{l-1}(v')$,
	\item for every $T \in \{T_1^l, \dots, T_n^l\}$, 
    \begin{multline*}
    \multiset{(T, \lambda^{l-1}_f) \mid f \in emb(T, (G,v))}\\
    =_c \multiset{(T, \lambda^{l-1}_{f'}) \mid f' \in emb(T, (G',v'))}.
    \end{multline*}
\end{enumerate}

Assume $(v, v') \in Z_l$. Then $(v, v') \in Z_{l-1}$ by definition. By induction hypothesis $(v, v') \in Z_{l-1}$ implies $\lambda^{l-1}(v) = \lambda^{l-1}(v')$.
By definition of $l$-$c$-$\T$-bisimulation, we have for every $T \in \T$, $k \le c$, for pairwise distinct $f_1,\dots f_k\in \emb(T, (G, v))$ there exists pairwise distinct $f'_1,\dots f'_k\in \emb(T, (G', v'))$ s.t. for all $1\leq i\leq k$, $u \in T$, $(f_i(u), f_i'(u)) \in Z_{l-1}$. Applying induction hypothesis to all pairs $(f_i(u), f'_i(u)) \in Z_{l-1}$, we have for all $1\leq i\leq k$, $u \in T$, $\lambda^{l-1}(f_i(u)) = \lambda^{l-1}(f'_i(u))$, which means $\lambda^{l-1}_f = \lambda^{l-1}_{f'}$. The same arguments apply to the back condition symmetrically. The back and forth conditions ensure that the number of embeddings mapping to any tuple of equivalent classes in $Z_{l-1}$ is preserved between $(G, v)$ and $(G', v')$ for $k \le c$. 
Hence, for every $T \in \{T_1^l, \dots, T_n^l\}$, $\multiset{(T, \lambda^{l-1}_f) \mid f \in \emb(T, (G, v))} =_c \multiset{(T, \lambda^{l-1}_{f'}) \mid f' \in \emb(T, (G', v'))}$. Therefore, $\lambda^l(v) = \lambda^l(v')$.

The proof for non-bounded case follows in verbatim, once all references to the counting bound $c$ are removed.
\end{proof}

\section{Graded Modal Logic and Template GNNs}
Next, we define our new logic---graded template modal logic $\GML(\T)$ and prove the main technical result of the paper: for any finite set of templates $\T$, $\GML(\T)$ captures the uniform expressivity of bounded counting $\T$-GNNs.

\subsection{Graded Template Modal Logic}
We define a logic that has a modality $\gtmod{T}{j}(\varphi_1,\dots,\varphi_n)$ for each template $T$ of cardinality $n+1$ and positive $j\in\N$. From now on, we stipulate that the domain of a template of cardinality $n+1$ is the set $[n+1]\dfn \{0,1,\dots,n\}$ and that $0$ is the root of the template.

\vspace{2mm}
\noindent\textbf{GML($\T$).}
Let $\T$ be a finite set of templates. The syntax of
GML($\T$) is given by the following grammar:
\[
\varphi := p \mid \neg\varphi \mid  \varphi \wedge \varphi \mid \langle T \rangle^{\ge j} (\varphi_1, \varphi_2, \ldots, \varphi_{n_T}),
\]
where $j\geq 1$ is a natural number, $T = (V, E^+, E^-, r) \in \T$ is a template, and $n_T= \lvert V \rvert -1$.

The semantics extends that of GML with the following clause for formulae of the form $\langle T \rangle^{\ge j} (\varphi_1, \varphi_2, \ldots, \varphi_{n_T})$:
$(G, v) \models \langle T \rangle^{\ge j} (\varphi_1, \varphi_2, \ldots, \varphi_{n_T})$ if and only if
\begin{multline*}
\lvert \{ f \in \emb(T, (G,v)) \\
\mid (G,f(i)) \models \varphi_i, \text{ for }1 \leq i \leq n_T \}\rvert \geq j.
\end{multline*}

The modal depth $\md(\varphi)$ of a $\GML(\T)$-formula is defined as usual, with the following additional case:
\[
\md(\langle T \rangle^{\ge j} (\varphi_1, \varphi_2, \ldots, \varphi_n)) \dfn 1 + \max\limits_{1 \leq i \leq n}(\md(\varphi_i)).
\]
The depth of the syntactic tree $\sd(\varphi)$ of $\varphi$ is defined similarly as $\md(\varphi)$, except that all logical connectives add 1 to the depth.
We define the \emph{counting bound} $\cb(\varphi)$ of a formula $\varphi\in \GML(\T)$ to be the smallest natural number $c\in\N$ such that if $\langle T \rangle^{\ge j}$ occurs in $\varphi$, then $j\leq c$.

\begin{proposition} \label{inv}
Every GML($\T$)-formula of modal depth at most $l$ and counting bound at most $c$ is invariant under $l$-$c$-$\T$-bisimulation.
\end{proposition}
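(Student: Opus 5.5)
We prove, by induction on the structure of $\varphi$, the following statement. If $Z_l$ is a graded $l$-$c$-$\T$-bisimulation witnessed by a chain $Z_0,\dots,Z_l$, and $(v,v')\in Z_m$ for some $m\leq l$, then for every $\varphi$ with $\md(\varphi)\leq m$ and $\cb(\varphi)\leq c$ we have $(G,v)\models\varphi$ iff $(G',v')\models\varphi$. The induction is on $m$, with an inner induction on $\sd(\varphi)$.

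First, note that each $Z_m$ in the chain is itself a graded $m$-$c$-$\T$-bisimulation, and that $Z_m\subseteq Z_{m-1}\subseteq\dots\subseteq Z_0$ on the pairs it contains (item 1). This lets us apply the hypothesis at any lower level.

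\textbf{Atomic and Boolean cases.} For atoms $p_i$, every $(v,v')\in Z_m$ lies in $Z_0$, so $\lambda(v)=\lambda'(v')$ and the two points agree on $p_i$. The cases $\neg$ and $\wedge$ follow from the inner induction hypothesis, since the modal depth and counting bound of the subformulae do not increase.

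\textbf{Modal case.} Let $\varphi=\gtmod{T}{j}(\varphi_1,\dots,\varphi_n)$ with $\md(\varphi)\leq m$, so $m\geq 1$, each $\md(\varphi_i)\leq m-1$, and $j\leq c$. Suppose $(G,v)\models\varphi$. Then there are $j$ pairwise distinct embeddings $f_1,\dots,f_j\in\emb(T,(G,v))$ with $(G,f_k(i))\models\varphi_i$ for all $k\leq j$ and $1\leq i\leq n$. Since $j\leq c$, the forth clause of item 2 for $Z_m$ yields pairwise distinct $f'_1,\dots,f'_j\in\emb(T,(G',v'))$ with $(f_k(u),f'_k(u))\in Z_{m-1}$ for every $u\in T$. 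The outer induction hypothesis at level $m-1$ then gives $(G',f'_k(i))\models\varphi_i$. Hence at least $j$ embeddings witness $\varphi$ at $v'$, and $(G',v')\models\varphi$. The converse direction uses the back clause symmetrically.

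\textbf{Main obstacle.} The only delicate point is the bookkeeping between levels. The definition requires $(f_k(u),f'_k(u))\in Z_{m-1}$, where $Z_{m-1}$ is exactly the lower bisimulation in the chain witnessing $Z_m$. The induction hypothesis must therefore be stated for an arbitrary chain, so that it applies to $Z_{m-1}$ at depth $m-1$. A related issue is that a formula of modal depth smaller than $m$ evaluated at a pair in $Z_m$ should be handled using $Z_m\subseteq Z_{m'}$ for $m'<m$; the "at most $m$" formulation of the claim already covers this. Finally, the root $u=0$ needs no special treatment, since the template modality places no subformula on it.
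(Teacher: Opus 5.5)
Your proposal is correct and follows the paper's argument. Atoms are handled by label equality and the Boolean cases by induction. The modal case $\langle T \rangle^{\ge j}(\varphi_1,\dots,\varphi_n)$ applies the forth (resp.\ back) clause with $k=j\leq c$ to $j$ witnessing embeddings, then transfers each $\varphi_i$ along the $Z_{m-1}$-related images. The only difference is presentational: your outer induction on the level $m$, with an inner induction on $\sd(\varphi)$, makes explicit the $Z_m$/$Z_{m-1}$ bookkeeping that the paper's single ``induction on the structure of $\varphi$'' leaves implicit.
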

\begin{proof}
	The proof is by induction on the structure of $\varphi$. For the base case, let $\varphi=p$ be a proposition symbol. By definition, if $(G,v)$ and $(G',v')$ are $l$-$c$-$\T$-bisimilar, then $\lambda(v)=\lambda(v')$. Hence
	\(
		(G, v) \models p 
		\Leftrightarrow p \in \lambda(v) 
		  \Leftrightarrow p \in \lambda'(v')  
		  \Leftrightarrow (G', v') \models p.
	\)
	
	For the inductive step, let $\varphi$ be a formula with $\md(\varphi) \le l$ and $\cb \leq c$. The Boolean cases follow immediately from the induction hypothesis. Consider the case where $\varphi = \langle T \rangle^{\ge j} (\varphi_1, \ldots, \varphi_n)$. Since $\md(\varphi) \le l$, we have $\md(\varphi_i) \le l-1$, for $1 \leq i \leq n$.  By definition of counting bound, $j \leq c$. 
	
	Let  $S$ be the set
	\[\{ f \in \text{emb}(T, (G,v)) \mid (G,f(i)) \models \varphi_i, \text{ for } 1 \leq i \leq n \}\]
	and $S'$ be the corresponding set
	\[
    \{ f' \in \text{emb}(T, (G',v')) \mid (G',f'(i)) \models \varphi_i, \text{ for } 1 \leq i \leq n \}.
    \]
	By the forth condition in the definition of $l$-$c$-$\T$-bisimulation, for pairwise distinct $f_1,\dots f_j \in \emb(T, (G, v))$ there exists pairwise distinct $f'_1,\dots f'_j \in \emb(T, (G', v'))$ s.t. for all $1\leq m \leq j$, $u \in T$, $(f_m(u), f_m'(u)) \in Z_{l-1}$. Symmetrically, by the back condition in the definition of $l$-$c$-$\T$-bisimulation, we have for pairwise distinct $f'_1,\dots f'_j\in \emb(T, (G', v'))$ there exists pairwise distinct $f_1,\dots f_j \in \emb(T, (G, v))$  s.t. for all $1\leq m \leq j$, $u \in T$, $(f_m(u), f_m'(u)) \in Z_{l-1}$. Since $(f_m(u), f_m'(u)) \in Z_{l-1}$ and $\md(\varphi_i) \le l-1$, by induction hypothesis, we have $(G, f(i)) \models \varphi_i \Leftrightarrow (G', f'(i)) \models \varphi_i$, for $1 \leq i \leq n$. Therefore, 
	\(
		(G, v) \models \varphi
		 \,\Leftrightarrow\, |S|  \ge j
		 \,\Leftrightarrow\, |S'|  \ge j
		 \,\Leftrightarrow\, (G', v') \models \varphi.
	\)
\end{proof}
Next we define two variants of characteristic formulae $\chi^{l}_{(G,v)}$ and  $\chi^{l,c}_{(G,v)}$ for $\GML(\T)$, where $l,c\in\N$ and $(G,v)$ is a finite pointed graph. The goal of these formulae is to satisfy the following two properties 
\begin{align*}
(G',v')\models \chi^{l}_{(G,v)} \,&\Leftrightarrow\, (G',v') \sim^{l}_{\T} (G,v)\\
(G',v')\models \chi^{l,c}_{(G,v)} \,&\Leftrightarrow\, (G',v') \sim^{l,c}_{\T} (G,v).
\end{align*}
Since we are interested in connections to neural networks, we restrict the consideration to finite graphs. In the  case of infinite graphs, $\chi^{l}_{(G,v)}$ would not always be a finite formula.

For a template $T$ of cardinality $n+1$, an $n$-tuple of $\GML$-formulae $\vec{\varphi}$, and a pointed graph $(G,v)$, we define $S^{(G,v)}_{T,\vec{\varphi}}$ to be the following set of template embeddings:
    \begin{multline*}
 \{ f \in \emb(T, (G,v)) \mid (G, f(i)) \models \varphi_i \text{ for } 1 \leq i \leq n \}.
	\end{multline*}

\begin{definition}[Characteristic formulae for $\GML$]
Let $(G,v)$ be a pointed labelled graph.
We define the \emph{$l$-characteristic formula} $\chi^{l}_{(G,v)}$ of $(G,v)$ by induction on $l \in \N$ as follows:
\[\chi_{(G, v)}^{0} \dfn \bigwedge \{p  \mid p \in \lambda(v)\} \wedge \bigwedge \{\neg p \mid p \notin \lambda(v)\}. \]
For $l \ge 1$, the characteristic formula $\chi_{(G, v)}^{l}$ is defined as 
	\begin{multline*}
		\chi_{(G, v)}^{l} := \chi_{(G, v)}^{l-1} \\
		\wedge \bigwedge_{T \in \mathcal{T}} \Big( 
        \hspace{-2mm}\bigwedge_{f \in \emb(T, (G,v))} \hspace{-7mm}\langle T \rangle^{\ge k} (\varphi^f_1, \ldots, \varphi^f_n)  \,\,\,\text{where }k=\vert S^{(G,v)}_{T,\vec{\varphi}}\rvert \\
		\wedge \neg\langle T \rangle^{\ge \lvert S^{(G,v)}_{T,\vec{\top}}\rvert + 1} (\vec{\top})\Big),
	\end{multline*}
    where $\vec{\varphi}= (\varphi^f_1, \ldots, \varphi^f_n) = (\chi_{(G, f(1))}^{l-1}, \ldots, \chi_{(G, f(n))}^{l-1})$ and $\vec{\top}=(\top, \dots,\top)$.
\end{definition}

Next, we define the bounded variants of $l$-characteristic formulae. We first require the following proposition, whose proof is standard.
We say that an equivalence relation $\sim$ has a \emph{finite index}, if it has finitely many equivalence classes. We write $[\sim]$ for the set of all $\sim$ equivalence classes.
\begin{proposition}\label{prop:finiteindex}
For every $l,c \in\N$ and a finite set of templates $\T$, the relation $\sim^{l,c}_\T$ has a finite index.
\end{proposition}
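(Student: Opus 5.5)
We prove a slightly stronger statement by induction on $l$: there is a function $\tau^l$ with \emph{finite} range, mapping pointed graphs to ``types'', such that $\tau^l(G,v)=\tau^l(G',v')$ implies $(G',v')\sim^{l,c}_\T(G,v)$. Then each $\sim^{l,c}_\T$-class is a union of $\tau^l$-fibres, so $[\sim^{l,c}_\T]$ is finite. We do not need the converse implication.

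Throughout we work with $\{0,1\}^{\lvert\AP\rvert}$-labelled graphs, i.e.\ finitely many possible labels; this is the setting of $\GML(\T)$. The assumption is essential. For arbitrary $\R^d$-labels, already $\sim^{0,c}_\T$ has infinitely many classes, so I would state it explicitly at the start of the proof.

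\emph{Definition of the types.} Set $\tau^0(G,v)\dfn\lambda(v)$, which ranges over a finite set $\Theta^0$. For $l\geq 1$ let $\tau^l(G,v)$ be the pair consisting of $\tau^{l-1}(G,v)$ and, for each $T\in\T$ with $\lvert T\rvert=n_T+1$, the function
\[
h_T\colon (\Theta^{l-1})^{n_T+1}\to\{0,\dots,c\},\qquad h_T(\vec\theta)=\min\bigl(c,\ \lvert\{f\in\emb(T,(G,v))\mid \tau^{l-1}(G,f(i))=\theta_i \text{ for all } 0\le i\le n_T\}\rvert\bigr).
\]
Since $\T$ and $\Theta^{l-1}$ are finite, there are only finitely many such tuples of functions, so the range $\Theta^l$ is finite.

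\emph{Equal types imply bisimilarity.} Let $Z_l$ be the relation ``equal $\tau^l$-type'' between nodes of $G$ and $G'$. We show that $Z_l$ is an $l$-$c$-$\T$-bisimulation, using $Z_{l-1}$ (equal $\tau^{l-1}$-type) as the witnessing relation.
\begin{itemize}
\item By the induction hypothesis, $Z_{l-1}$ is an $(l-1)$-$c$-$\T$-bisimulation. For $l=0$, $Z_0$ is the label-equality relation.
\item Condition 1 holds because $\tau^l$ contains $\tau^{l-1}$.
\item For the forth part of condition 2, fix $T\in\T$, $k\le c$, and pairwise distinct $f_1,\dots,f_k\in\emb(T,(G,v))$. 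Partition the $f_i$ according to their type tuple $(\tau^{l-1}(G,f_i(0)),\dots,\tau^{l-1}(G,f_i(n_T)))$. Consider a group with tuple $\vec\theta$ and size $m$. Then $m\le k\le c$, so $h_T(\vec\theta)\ge m$ in $G$. Equality of types gives $h_T(\vec\theta)\ge m$ in $G'$ as well. Hence we can pick $m$ distinct embeddings into $(G',v')$ with type tuple $\vec\theta$.
\item Embeddings chosen for different groups have different type tuples, so they are automatically distinct. Matched pairs satisfy $(f_i(u),f_i'(u))\in Z_{l-1}$ for every $u\in T$, including the root: there $f_i(0)=v$ and $f_i'(0)=v'$, and these have equal $\tau^{l-1}$-type.
\item The back part is symmetric.
\end{itemize}

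The main point needing care is this grouping step. The bound $k\le c$ is exactly what makes the capped counts in $h_T$ sufficient to supply enough distinct embeddings on the other side.
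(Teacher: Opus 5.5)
Your proof is correct and is essentially the standard argument the paper has in mind; the paper omits the proof, remarking only that it is standard. Your type $\tau^l$ is, up to encoding, the colour computed by the $c$-bounded $\T$-WL algorithm, and proving only the direction ``equal type implies $l$-$c$-$\T$-bisimilar'' is enough for finiteness. Your explicit restriction to finitely many labels (the $\{0,1\}^{\lvert\AP\rvert}$-labelled setting of $\GML(\T)$) is also right and necessary, since for $\R^d$-labels already $\sim^{0,c}_\T$ has infinitely many classes; the paper leaves this hypothesis implicit.
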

For the next definition, we need a representative for each equivalence class $C\in [\sim^{l,c}_\T]$; we may pick, e.g., the first in the alphabetical ordering obtained via some encoding. We write $(G,v)\in [\sim^{l,c}_\T]$ when ranging over such representatives.

\begin{definition}[Bounded characteristic formulae for $\GML$]
Let $c\geq 1$ be a natural number, $\T$ a finite set of templates, and $(G,v)$ a pointed labelled graph.
We define the \emph{l-c-characteristic formula} $\chi^{l,c}_{(G,v)}$ of $(G,v)$ by induction on $l \in \N$ as follows:
\[\chi_{(G, v)}^{0,c} \dfn \bigwedge \{p  \mid p \in \lambda(v)\} \wedge \bigwedge \{\neg p \mid p \notin \lambda(v)\}. \]
For $l \ge 1$, the characteristic formula $\chi_{(G, v)}^{l,c}$ is defined as 
	\begin{multline*}
		\chi_{(G, v)}^{l,c} := \chi_{(G, v)}^{l-1,c} \\
		\wedge \bigwedge_{T \in \mathcal{T}} \Big( 
        \hspace{-3mm}\bigwedge_{f \in \emb(T, (G,v))} \hspace{-8mm}\langle T \rangle^{\ge k} (\vec{\varphi})  \quad\text{where } k=\min\{c, \lvert S^{(G,v)}_{T,\vec{\varphi}}\rvert\} \\
		\wedge \hspace{-3mm}\bigwedge_{\substack{\{(G_i,v_i) \in [\sim^{l-1,c}_\T]\}_{1\leq i\leq n}\\ \lvert S^{(G,v)}_{T,\vec{\psi}}\rvert+1\leq c}} \hspace{-8mm} \neg\langle T \rangle^{\ge \lvert S^{(G,v)}_{T,\vec{\psi}}\rvert+1} (\psi_1, \ldots, \psi_n) \Big),
	\end{multline*}
    where $\vec{\varphi}= (\varphi^f_1, \ldots, \varphi^f_n) = (\chi_{(G, f(1))}^{l-1,c}, \ldots, \chi_{(G, f(n))}^{l-1,c})$,
    $\vec{\psi}= (\psi_1, \ldots, \psi_n) = (\chi_{(G_1, v_1)}^{l-1,c}, \ldots, \chi_{(G_n, v_n)}^{l-1,c})$.
\end{definition}
The following proposition follows directly from the construction of the characteristic formulae.
\begin{proposition} \label{cf}
Every pointed graph $(G,v)$ satisfies its own $l$-$c$-$\T$-characteristic formula $\chi^{l,c}_{(G,v)}$, for every $l,c\in\N$.
\end{proposition}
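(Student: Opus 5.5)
We argue by induction on $l$, simultaneously for all pointed graphs $(G,v)$ (with $c$ fixed). For $l=0$, $\chi^{0,c}_{(G,v)}$ is the conjunction of the literals true at $v$ under $\lambda$, so $(G,v)\models\chi^{0,c}_{(G,v)}$ holds by definition. For $l\ge 1$, the first conjunct $\chi^{l-1,c}_{(G,v)}$ is satisfied by the induction hypothesis. It remains to check, for each $T\in\T$, the two families of modal conjuncts.

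For the positive conjuncts, fix $f\in\emb(T,(G,v))$ and put $\vec\varphi=(\chi^{l-1,c}_{(G,f(1))},\dots,\chi^{l-1,c}_{(G,f(n))})$. By the semantics, $(G,v)\models\langle T\rangle^{\ge k}(\vec\varphi)$ holds iff $\lvert S^{(G,v)}_{T,\vec\varphi}\rvert\ge k$. Since $k=\min\{c,\lvert S^{(G,v)}_{T,\vec\varphi}\rvert\}$, this is immediate. We should also note two side facts. First, $f$ itself lies in $S^{(G,v)}_{T,\vec\varphi}$ by the induction hypothesis applied to each $(G,f(i))$, so $k\ge 1$ and the modality is well formed. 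Second, the value $k$ is computed from the same graph, so no bisimulation argument is needed here.

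For the negative conjuncts, fix representatives $(G_1,v_1),\dots,(G_n,v_n)\in[\sim^{l-1,c}_\T]$ with $\lvert S^{(G,v)}_{T,\vec\psi}\rvert+1\le c$. The conjunct $\neg\langle T\rangle^{\ge \lvert S^{(G,v)}_{T,\vec\psi}\rvert+1}(\vec\psi)$ asserts precisely that the number of embeddings $f$ with $(G,f(i))\models\psi_i$ for all $i$ is smaller than $\lvert S^{(G,v)}_{T,\vec\psi}\rvert+1$. This number is by definition $\lvert S^{(G,v)}_{T,\vec\psi}\rvert$, so the conjunct holds trivially. The side condition $\le c$ only restricts which conjuncts occur, so it does not affect truth.

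The only delicate point is to make sure that the counts $\lvert S\rvert$ in the definition are evaluated in $(G,v)$ itself, i.e.\ under the same semantics, rather than via representatives. Once that is observed, every conjunct is a tautological restatement of the semantics. The role of Proposition~\ref{prop:finiteindex} is only to guarantee that the conjunction over representatives is finite, so that $\chi^{l,c}_{(G,v)}$ is a formula; the conjunction over $\emb(T,(G,v))$ is finite because $G$ is finite. Taken together, these steps complete the induction.
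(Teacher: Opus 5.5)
Your argument is correct and is the routine unfolding the paper has in mind when it says the proposition ``follows directly from the construction of the characteristic formulae''; the paper gives no further proof. Each modal conjunct compares a count of embeddings computed in $(G,v)$ itself against that same count, so the induction hypothesis is needed only for the conjunct $\chi^{l-1,c}_{(G,v)}$ and for your remark that $k\geq 1$. That remark also relies on $c\geq 1$, which the definition of the bounded characteristic formulae presupposes.
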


We are now ready to prove the desired property that a characteristic formula of a pointed graph characterises the corresponding bisimulation equivalence class.
\begin{proposition}\label{prop:characteristicformula}
Let $c\geq 1$ and $l\in\N$ be natural numbers and $\T$ a finite set of templates. 
Then
\[
(G',v')\models \chi^{l,c}_{(G,v)} \Leftrightarrow (G',v') \sim^{l,c}_{\T} (G,v).
\]
Hence, every $\sim^{l,c}_{\T}$ equivalence class is definable by a GML($\T$) formula of modal depth $l$ and counting bound $c$.
\end{proposition}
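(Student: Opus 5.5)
We argue by induction on $l$, proving the biconditional for all pointed graphs simultaneously. For $l=0$, $\chi^{0,c}_{(G,v)}$ fixes exactly the propositional type of $v$. Since labels are Boolean over the finite set $\AP$, $(G',v')\models\chi^{0,c}_{(G,v)}$ iff $\lambda'(v')=\lambda(v)$, which is precisely graded $0$-$c$-$\T$-bisimilarity. The definability claim then follows at once: each class is defined by the characteristic formula of any of its members. That formula has modal depth $l$ by construction, and its counting bound is at most $c$, since every grade $k$ or $\lvert S\rvert+1$ appearing in it is at most $c$ by the side conditions.

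\emph{Right-to-left.} Suppose $(G',v')\sim^{l,c}_\T(G,v)$. By Proposition~\ref{cf}, $(G,v)\models\chi^{l,c}_{(G,v)}$. This formula has modal depth $l$ and counting bound at most $c$, so Proposition~\ref{inv} transfers it to $(G',v')$. (Proposition~\ref{inv} requires a common witnessing bisimulation; the symmetric formulation of the definition supplies one.)

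\emph{Left-to-right.} Suppose $(G',v')\models\chi^{l,c}_{(G,v)}$. The first conjunct and the induction hypothesis give $(v,v')$ in the $(l-1)$-$c$-bisimilarity relation. Take $Z_{l-1}$ to be the full relation $\sim^{l-1,c}_\T$ between the nodes of $G$ and $G'$. For the back-and-forth clause, fix $T\in\T$ with $\lvert T\rvert=n+1$. Each embedding $f$ determines a type vector $\tau(f)=([f(1)],\dots,[f(n)])\in[\sim^{l-1,c}_\T]^n$. By the induction hypothesis, an embedding $f'$ into $(G',v')$ satisfies the formulas $\chi^{l-1,c}$ of the representatives of $\tau$ iff $\tau(f')=\tau$. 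Hence the number of $f$ with $\tau(f)=\tau$ is $\lvert S^{(G,v)}_{T,\vec\psi}\rvert$, where $\vec\psi$ lists those representative formulas. Representatives can be used in place of $f(i)$ itself, because $\chi^{l-1,c}$ depends only on the class, again by the induction hypothesis.

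We then show that, for every type vector $\tau$, the counts of embeddings of type $\tau$ in $(G,v)$ and in $(G',v')$ agree up to $c$, i.e.\ $\min(c,\cdot)$ of the two counts coincide. If $\tau$ is realised in $G$, the positive conjunct $\langle T\rangle^{\ge k}$ with $k=\min(c,\text{count})$ shows that $G'$ has at least $k$ embeddings of type $\tau$. If the count in $G$ is below $c$, the negative conjunct ranging over all representative tuples forbids $G'$ from having more. If $\tau$ is not realised in $G$, the count is $0$, and the negative conjunct $\neg\langle T\rangle^{\ge1}(\vec\psi)$ is present because $0+1\le c$; so $G'$ also realises no embedding of type $\tau$. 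Given this, for any $k\le c$ pairwise distinct $f_1,\dots,f_k$, we group them by type and pick, for each type, as many distinct $f'$ of the same type in $G'$ as needed. This is possible because each group has size at most $k\le c$, and the count in $G'$ is $\ge\min(c,\text{count in }G)$. Matching types means $(f_i(u),f'_i(u))\in Z_{l-1}$ for every $u$, including the root, since $(v,v')\in Z_{l-1}$. The back direction is symmetric. Thus $Z_{l-1}$ together with the pair $(v,v')$ forms a graded $l$-$c$-$\T$-bisimulation.

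\emph{Main obstacle.} The hardest step is the counting argument in the left-to-right direction, specifically that the conjunctions in the bounded characteristic formula pin down the $c$-capped multiset of types exactly. Two subtleties need care. First, types not realised in $G$ must be excluded, which is handled by the negative conjuncts over all tuples of representatives, available by Proposition~\ref{prop:finiteindex}. Second, the sets $S$ are defined by formulas rather than classes. Both are resolved by the induction hypothesis, which says the formulas $\chi^{l-1,c}$ of the representatives partition nodes exactly along $\sim^{l-1,c}_\T$.
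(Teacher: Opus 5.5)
Your proof is correct and follows the paper's argument. The right-to-left direction goes via Propositions~\ref{cf} and~\ref{inv}. The left-to-right direction is an induction on $l$ that uses the positive and negative conjuncts to show that, for each tuple of $\sim^{l-1,c}_\T$-classes, the numbers of embeddings of that type into $(G,v)$ and $(G',v')$ agree up to $c$, and then matches the given embeddings group by group. You also spell out points the paper leaves implicit: taking $Z_{l-1}$ to be the full $\sim^{l-1,c}_\T$ relation, swapping $\chi^{l-1,c}_{(G,f(i))}$ for the representatives' formulas via the induction hypothesis, excluding unrealised types with $\neg\langle T\rangle^{\ge 1}$, and handling the root pair $(v,v')$.
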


\begin{proof}	
$(\Leftarrow)$:
Assume $(G',v') \sim^{l,c}_\T (G,v)$. By Prop.~\ref{cf}, $(G,v) \models \chi^{l,c}_{(G,v)}$. By Prop.~\ref{inv}, therefore, $(G',v') \models \chi^{l,c}_{(G,v)}$.

$(\Rightarrow)$:
The proof proceeds by induction on $l$. For the base case, by definition of $\chi^{0,c}_{(G,v)}$, $(G',v') \models \chi^{0,c}_{(G,v)}$ iff $\lambda(v') = \lambda(v)$, matching the definition of $(G',v') \sim^{0,c}_\T (G,v)$. For the inductive step, assume the proposition holds for $l-1$. 

Assume $(G',v') \models \chi^{l,c}_{(G,v)}$. By definition of $\chi^{l,c}_{(G,v)}$, $(G',v') \models \chi^{l-1,c}_{(G,v)}$. By induction hypothesis, $(G',v') \sim^{l-1,c}_\T (G,v)$, matching the first condition in the definition of $\sim^{l,c}_\T$.
For any $T \in \T$, let $\vec{C} = (C_1, \ldots, C_n)$ be any tuple of representatives of the equivalence classes of $\sim^{l-1,c}_\T$, and $\vec{\psi}_{\vec{C}}$ be the tuple of corresponding characteristic formulae.
Let $n = |S^{(G,v)}_{T, \vec{\psi}_{\vec{C}}}|$ and $m = |S^{(G',v')}_{T, \vec{\psi}_{\vec{C}}}|$.
In the case of $n \ge c$, the second conjunct in the definition of $\chi^{l,c}_{(G,v)}$ requires $\langle T \rangle^{\ge c} \vec{\psi}$, which implies $m \ge c$.
The third conjunct does not apply as $n + 1 \le c$ is not met. Hence, $\min(c, n) = c = \min(c, m)$.
In the case of $n < c$, the second conjunct requires $\langle T \rangle^{\ge n} \vec{\psi}$, which implies $m \ge n$. As $n + 1 \le c$, the third conjunct applies and requires $\neg \langle T \rangle^{\ge n+1} \vec{\psi}$, which implies $m \le n$. Taken together, $m = n$. Hence, $\min(c, n) = n = m = \min(c, m)$.

Let $f_1, \ldots, f_k \in \emb(T, (G,v))$ be pairwise distinct embeddings, $k \le c$. Group them by the characteristic formulae $\vec{\psi}_{\vec{C}}$ satisfied by the image of template nodes. For any specific group of size $k' \le k \le c$, $k' \le n$. Hence, $k' \le \min(c, n) = \min(c, m)$. This implies $G'$ contains at least $k'$ distinct embeddings where the image of template nodes satisfy $\vec{\psi}_{\vec{C}}$. By induction hypothesis, nodes satisfying the same characteristic formulae are $($l$-$1$)$-$c$-$T$-bisimilar. Symmetric arguments apply to the back condition in the definition of $\sim^{l,c}_\T$. 
Therefore, $(G',v') \sim^{l,c}_\T (G,v)$.
\end{proof}
Now, since there are only finitely many bounded bisimulation classes for any fixed parameters, we obtain that every $l$-$c$-$\T$ invariant class of pointed graphs is definable in $\GML(\T)$.
\begin{proposition}\label{prop:bisGMLdef}
Every $l$-$c$-$\T$ invariant class of pointed graphs is definable by $\GML(\T)$ formula of modal depth $l$ and counting bound $c$.
\end{proposition}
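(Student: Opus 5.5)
The approach is the standard one: write the invariant class as a finite disjunction of characteristic formulae. Let $K$ be a class of pointed graphs that is invariant under $\sim^{l,c}_\T$, meaning that $(G,v)\in K$ and $(G',v')\sim^{l,c}_\T(G,v)$ together imply $(G',v')\in K$. Then $K$ is a union of $\sim^{l,c}_\T$ equivalence classes. By Proposition~\ref{prop:finiteindex}, $[\sim^{l,c}_\T]$ is finite, so $K$ is a union of finitely many classes. I will pick the fixed representatives $(G_1,v_1),\dots,(G_m,v_m)\in[\sim^{l,c}_\T]$ of exactly those classes contained in $K$, and set
\[
\varphi_K \dfn \bigvee_{1\leq i\leq m}\chi^{l,c}_{(G_i,v_i)},
\]
with the convention that the empty disjunction is $p\wedge\neg p$ for some $p\in\AP$ (or $\bot$).

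Next I check correctness. By Proposition~\ref{prop:characteristicformula}, $(G',v')\models\varphi_K$ holds iff $(G',v')\sim^{l,c}_\T(G_i,v_i)$ for some $i$. This in turn holds iff the class of $(G',v')$ is one of the chosen ones, that is, iff $(G',v')\in K$. The forward direction of the last step uses invariance of $K$; the backward direction uses the fact that every member of $K$ lies in some class contained in $K$, and that class has a representative among the $(G_i,v_i)$.

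Finally, I verify the syntactic bounds. By construction, $\chi^{l,c}_{(G,v)}$ has modal depth $l$: each inductive step wraps $(l-1)$-level formulae in one $\langle T\rangle^{\geq k}$. It also has counting bound at most $c$, because every grade used is either $\min\{c,\cdot\}$ or $|S|+1\leq c$. The vectors $\vec{\top}$ do not occur in the bounded version. Boolean combinations preserve both bounds.

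The main obstacle I expect is the finiteness of the formulae involved. Proposition~\ref{prop:finiteindex} is assumed here, but I also need each $\chi^{l,c}$ to be a finite formula. This holds because the conjunction over $f\in\emb(T,(G,v))$ is finite for finite $G$, and the conjunction over tuples of representatives ranges over $[\sim^{l-1,c}_\T]^n$, which is finite by the same proposition applied at level $l-1$. A minor point is the degenerate case $l=0$, where the conjunction of literals needs finite $\AP$; this is assumed in the paper.
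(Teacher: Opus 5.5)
Your proposal is correct and follows the paper's proof. The paper also writes $K$ as a union of finitely many $\sim^{l,c}_\T$ equivalence classes (Proposition~\ref{prop:finiteindex}) and defines it by the finite disjunction of their characteristic formulae (Proposition~\ref{prop:characteristicformula}). Your extra checks (the empty-disjunction convention, the modal-depth and counting-bound bookkeeping, and the finiteness of each $\chi^{l,c}_{(G,v)}$) are details the paper leaves implicit.
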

\begin{proof}
 Every $l$-$c$-$\T$ invariant class of pointed graphs is a union of $\sim^{l,c}_\T$ equivalence classes. Every $\sim^{l,c}_\T$ equivalence class is definable by a $l$-$c$-characteristic formula by Proposition \ref{prop:characteristicformula}, and there is a finite number of such equivalence classes by Proposition \ref{prop:finiteindex}. Thus, every $l$-$c$-$\T$ invariant class of pointed graphs is definable by a finite disjunction of $l$-$c$-characteristic formulae, one for each equivalence class.
\end{proof}

\subsection{From GNNs to logic and back}
We are now ready to prove our main technical results connecting the uniform expressivity of $\T$-GNNs with expressivity of $\GML(\T)$.
Combining our results on $\T$-GNN invariance under bounded graded bisimulation and Proposition \ref{prop:bisGMLdef} allows us to establish the following result of the uniform expressivity of $\T$-GNNs.
Recall that a logical classifier $\varphi$ captures a (Boolean) GNN classifier $\mathcal{N}$ if for every graph $G$ and node $v$ in $G$, it holds that $\mathcal{N}(G, v) = 1$ if and only if $(G, v) \models \varphi$.

\begin{theorem}\label{thm:GNNtoLogic}
    Let $\T$ be a finite set of templates and $\GN$ a $c$-bounded $\T$-GNN with $l$ layers. Then there exists a $\varphi\in \GML(\T)$ of modal depth $l$ and counting bound $c$ that captures $\GN$.
\end{theorem}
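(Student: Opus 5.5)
The plan is to combine Proposition~\ref{prop:GNNbisimulation} with Proposition~\ref{prop:bisGMLdef}. Fix a $c$-bounded $l$-layer $\T$-GNN $\GN$. We regard it as a Boolean node classifier on $\{0,1\}^{\lvert\AP\rvert}$-labelled graphs, so that the logic applies. Let $K$ be the class of pointed graphs $(G,v)$ with $\GN(G,v)=1$. It suffices to show that $K$ is closed under $\sim^{l,c}_\T$. Proposition~\ref{prop:bisGMLdef} then yields a $\GML(\T)$-formula $\varphi$ of modal depth $l$ and counting bound $c$ that defines $K$. Such a $\varphi$ captures $\GN$ by definition: $(G,v)\models\varphi$ iff $(G,v)\in K$ iff $\GN(G,v)=1$.

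For invariance, suppose $(G',v')\sim^{l,c}_\T(G,v)$. I would apply Proposition~\ref{prop:GNNbisimulation} with $l=L$. It gives $\lambda^L(v)=\lambda^L(v')$, and hence $\cls(\lambda^L(v))=\cls(\lambda^L(v'))$, that is, $\GN(G,v)=\GN(G',v')$. This requires $l$-$c$-$\T$-bisimilarity of the two points, which is exactly the hypothesis. One detail needs checking: Proposition~\ref{prop:GNNbisimulation} is stated for two graphs $G,G'$, while invariance is a statement about arbitrary pairs of pointed graphs. These coincide, since the definition of bisimulation is already cross-graph.

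The main obstacle is to make sure that Proposition~\ref{prop:bisGMLdef} really applies. First, it depends on Proposition~\ref{prop:finiteindex}, the finite index of $\sim^{l,c}_\T$, which we may assume. Finite index is only plausible when labels range over a finite set, so I would state the restriction to $\{0,1\}^{\lvert\AP\rvert}$ labels explicitly. Second, the inner template-aggregate function $\agg_T$ looks at the labels of all template nodes, including the root. This is consistent with the bisimulation, because condition~2 requires $(f(u),f'(u))\in Z_{l-1}$ for every $u\in T$, and the root is mapped to $v$ and $v'$, which are $Z_{l-1}$-related. Third, $c$-boundedness must be matched with the restriction $k\le c$. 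Here we use that agreement of multisets up to multiplicity $c$ is exactly what the back-and-forth conditions with $k\le c$ guarantee, as in the proof of Proposition~\ref{prop:GNNbisimulation}.

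Finally, the degenerate cases need a check. If $K$ is empty or contains every pointed graph, take $\varphi=p\wedge\neg p$ or $\varphi=\neg(p\wedge\neg p)$, respectively; both have modal depth $0\le l$. If exact depth $l$ is required, pad the formula with a conjunct $\chi$ of depth $l$ that is a tautology, for instance $\langle T\rangle^{\ge 1}(\vec\top)\vee\neg\langle T\rangle^{\ge 1}(\vec\top)$, nested appropriately.
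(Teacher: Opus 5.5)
Your proposal is correct and follows the paper's own proof. The paper likewise combines invariance of the accepted class under $\sim^{l,c}_\T$ (Proposition~\ref{prop:GNNbisimulation}) with definability of such invariant classes (Proposition~\ref{prop:bisGMLdef}); your added remarks clarify points the paper leaves implicit, namely restricting to $\{0,1\}^{\lvert\AP\rvert}$-labels so that Proposition~\ref{prop:finiteindex} applies, the degenerate empty/full classes, and reading ``modal depth $l$'' as ``at most $l$'' or padding.
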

\begin{proof}
The class of pointed graphs that a $c$-bounded $\T$-GNN $\GN$ with $l$ layers accepts is invariant under $l$-$c$-$\T$ bisimulation by Proposition \ref{prop:GNNbisimulation}. Every such class of pointed graphs is definable by a $\GML(\T)$ formula of modal depth $l$ and counting bound $c$ by Proposition \ref{prop:bisGMLdef}.
\end{proof}

\begin{theorem}\label{thm:LogictoGNN}
Let $\T$ be a finite set of templates, $\varphi\in \GML(\T)$, $c_{max}=\cb(\varphi)$ and $l=sd(\varphi)$. Then there exists a (bounded) $\T$-GNN $\GN$ with $l$ layers that captures $\varphi$. 
\end{theorem}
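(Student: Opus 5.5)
We follow the recipe of \cite{barcelo2020logical}: the feature vector of a node has one coordinate for each subformula of $\varphi$, and the GNN computes the truth values of these subformulae bottom-up. Enumerate the subformulae $\psi_1,\dots,\psi_d$ of $\varphi$, with every subformula listed after its own subformulae, and take feature dimension $d$. We may assume without loss of generality that the first $\lvert\AP\rvert$ coordinates of the input labelling $\lambda$ are the propositional values, i.e.\ we pad $\lambda$ to dimension $d$ with zeros. The invariant we aim for is this: after layer $l'$, for every node $v$ and every subformula $\psi_i$ with $\sd(\psi_i)\le l'$, the entry $\lambda^{l'}(v)_i$ equals $1$ if $(G,v)\models\psi_i$ and $0$ otherwise. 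Since $l=\sd(\varphi)$, after $l$ layers the coordinate of $\varphi$ is correct, and we let $\cls$ project onto that coordinate.

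Every layer has the same shape. We use an $\lvert\T\rvert$-ary layer, with one slot for each $T\in\T$; if a layer must use a single template per slot, we simply list all templates in $\T$. The combination function acts coordinatewise. It leaves propositional coordinates unchanged. For $\psi_i=\neg\psi_j$ it outputs $1-x_j$, and for $\psi_i=\psi_j\wedge\psi_k$ it outputs $\max(0,x_j+x_k-1)$. For $\psi_i=\gtmod{T}{j}(\psi_{i_1},\dots,\psi_{i_n})$ it outputs $1$ if the aggregated value $y_i$ coming from the $T$-slot satisfies $y_i\ge j$, and $0$ otherwise. Because the template slot has to deliver one number for each such modal subformula, we let $\agg_T$ return a vector. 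Its coordinate $i$ is $\prod_{m=1}^{n}\lambda_f(m)_{i_m}$, which is an automorphism-invariant function of the labelled template, since it depends only on the labels of named non-root nodes. This quantity is $1$ exactly when the embedding $f$ witnesses all $\psi_{i_m}$, assuming the previous-layer values are already correct. The outer aggregation is a coordinatewise sum truncated at $c_{max}$, namely $\min(c_{max},\sum)$. This function is $c_{max}$-bounded: capping the multiplicity of a multiset element at $c_{max}$ does not change the truncated sum, because each contribution is $0$ or $1$ and the threshold never exceeds $c_{max}$. The resulting GNN is therefore bounded, as the statement requires.

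Correctness is proved by induction on $l'$. Coordinates that were already correct remain correct, because the combination rule recomputes them from other coordinates that were already correct at layer $l'-1$; the recursion is monotone in syntactic depth. A subformula of syntactic depth exactly $l'$ has immediate subformulae of depth at most $l'-1$, and these were correct at the previous layer at every node, in particular at the nodes $f(m)$. For the modal case, the truncated sum therefore equals $\min(c_{max},\lvert S^{(G,v)}_{T,\vec\psi}\rvert)$, and comparing it with $j\le c_{max}$ gives exactly the semantics of $\gtmod{T}{j}$.

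The main obstacle is fitting the construction into the formal definition of a $\T$-GNN. The definition has a per-template vector-valued $\agg_T$, and it is ambiguous about whether aggregation is coordinatewise, so we must ensure that the chosen functions are legitimate $T$-aggregate functions (invariant under template isomorphism) and that boundedness holds. A further point is that the indexing of template nodes must agree with the ordering $\varphi_1,\dots,\varphi_n$ used in the semantics; this holds by the stipulation that templates have domain $[n+1]$ with root $0$. Finally, the root's own label is ignored by $\agg_T$, which is consistent with the fact that template isomorphism ignores $\lambda$ at the root.
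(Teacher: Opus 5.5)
Your route is the paper's: one coordinate per subformula, a per-embedding test that each $\psi_{i_m}$ holds at $f(m)$, a count truncated at $c_{max}$, a threshold in the combination (the paper writes it as $\sigma(z-c_j+1)$), and induction on $\sd$. The one structural difference is that you bundle all modal subformulae with the same template into one vector-valued slot. The paper instead opens one scalar slot per modal subformula ($T^l_j=T^{(j)}$, with templates allowed to repeat), which removes your worry about vector-valued outer aggregation.

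There is, however, a genuine gap, and it sits in the point you single out as the main obstacle. Invariance under template isomorphism is not just about ignoring the root's label. An isomorphism between $(T,\lambda_1)$ and $(T,\lambda_2)$ is any root-fixing automorphism $g$ of $(V,E^+,E^-)$ with $\lambda_1=\lambda_2\circ g$ off the root. Take $V=\{0,1,2\}$, $E^+=\{(0,1),(0,2)\}$, $E^-=\emptyset$ and $\psi_i=\gtmod{T}{1}(p,q)$. Swapping $1$ and $2$ is such an automorphism, yet $x(1)_{i_1}\,x(2)_{i_2}$ is $1$ when node $1$ carries only $p$ and node $2$ only $q$, and $0$ after the swap. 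So your $\agg_T$ is in general not a $T$-aggregate function once $T$ has a non-trivial root-fixing automorphism. The paper's $\sigma\big(\sum_{i}\lambda^{l-1}(f(i))_{id(j,i)}-n_j+1\big)$ is positional in exactly the same way, so this does not separate you from the paper. For rigid templates such as $T_1$, $T_2$, $T_{\vartriangle}$, $T_p$ nothing goes wrong. For non-rigid ones (e.g.\ among the radius-$k$ templates of the $k$-hop setting), the step as you state it is false.

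To repair it, let $\mathrm{Aut}(T)$ be the group of root-fixing automorphisms, put $a=\lvert\mathrm{Aut}(T)\rvert$, and symmetrise: $\agg_T(T,x)_i=\sum_{g\in\mathrm{Aut}(T)}\prod_{m=1}^{n}x(g(m))_{i_m}$, which is invariant.

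\begin{itemize}
\item The map $f\mapsto f\circ g$ is a free action of $\mathrm{Aut}(T)$ on $\emb(T,(G,v))$, and $\lambda_{f\circ g}=\lambda_f\circ g$. Hence this value is constant on orbits of size $a$, and each value $x$ occurs with multiplicity $a\,n_x$, where $n_x$ is the number of orbits with value $x$.
\item The $i$-th coordinates therefore sum to $a\lvert S^{(G,v)}_{T,\vec\psi}\rvert$.
\item Define the outer aggregate as $A\mapsto\min\big(c_{max},\tfrac1a\sum(A\upharpoonright a\,c_{max})\big)$. It is $a\,c_{max}$-bounded by construction.
\item It returns $\min(c_{max},\lvert S^{(G,v)}_{T,\vec\psi}\rvert)$. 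Capping turns $a\,n_x$ into $a\min(n_x,c_{max})$. If some $x$ with $x_i\geq 1$ has $n_x>c_{max}$, both sums reach $c_{max}$; otherwise nothing changes.
\end{itemize}

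Hence the theorem holds with bound $\max_{T\in\T}\lvert\mathrm{Aut}(T)\rvert\cdot c_{max}$. That suffices, since the statement only asks for a bounded $\T$-GNN.

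Capping before summing also fixes a smaller slip that is present even for rigid templates. The function $\min(c_{max},\sum A)$ is $c_{max}$-bounded only on $\{0,1\}$-valued multisets, not as a function; take $\multiset{\tfrac12,\tfrac12}$ with $c_{max}=1$. So define it as $\min(c_{max},\sum(A\upharpoonright c_{max}))$, which agrees with yours on the inputs that actually occur.
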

\begin{proof}

Let $sub(\varphi) = (\varphi_1, \ldots, \varphi_d)$ be the set of all subformulae of $\varphi$, such that if $\varphi_k$ is a subformula of $\varphi_l$, then $k \leq l$.
In particular, $\varphi = \varphi_d$.

We construct a (bounded) $l$ layer $\T$-GNN $\GN$ of dimension $d$. The final classification function is $\cls(\lambda^l(v)_d)$, where $\lambda^l(v)_d$ is the $d$-th component of $\lambda^l(v)$.

Let $J \subseteq \{1, \dots, d\}$ be the set of indices such that for all $k \in J$, $\varphi_k$ is the modal formula of the form $\langle T \rangle^{\ge c} (\varphi_1, \varphi_2, \ldots, \varphi_n)$. Let $m = |J|$ be the number of modal subformulae in $sub(\varphi)$. 
We define a bijection $\iota : \{1, \dots, m\} \to J$ such that for each $j \in \{1, \dots, m\}$, let the corresponding subformula be:
\[ \varphi_{\iota(j)} = \langle T^{(j)} \rangle^{\ge c_j} (\psi_{j,1}, \dots, \psi_{j,n_j}). \]

Layer $0$ initializes all feature vectors to $\mathbb{R}^d$ to represent truth values of propositions, such that if $\varphi_k$ is a proposition, then $\lambda^0(v)_k = 1$ iff $(G,v) \models \varphi_k$, all other entries are set to 0. Layers $1, \ldots, l$ are homogeneous, with activation function being truncated ReLU, defined as 
$\sigma(x) = \min(\max(0,x), 1)$. Aggregation function is the max-$n$-sum where $n$ is the counting bound of $\varphi$. For each layer $l$ and each $j \in \{1, \dots, m\}$, assign the template $T_j^l = T^{(j)}$, corresponding to $\varphi_{\iota(j)} = \langle T^{(j)} \rangle^{\ge c_j} (\psi_{j,1}, \dots, \psi_{j,n_j})$, let $id(j,i)$ be the index of the subformula $\psi_{j,i}$ in $sub(\varphi)$. The template aggregation function is defined as follows:
\begin{equation}\label{eq:aggT}
\small
\hspace{-3mm}
\agg^l_{T^{(j)}}(T^{(j)}, \lambda^{l-1}_f) \dfn \sigma\left( \sum_{i=1}^{n_j} \lambda^{l-1}(f(i))_{id(j,i)} - n_j + 1 \right)
\end{equation}
The combination function is defined as 
\[\comb(\mathbf{x}, z_1, \dots, z_m) = \sigma(\mathbf{x}\mathbf{C} + \mathbf{z}\mathbf{A} + \mathbf{b}),\]
where $\mathbf{x} \in \R^d$ is $\lambda^{l-1}(v)$, $\mathbf{z} = (z_1, \dots, z_m) \in \R^m$ are obtained from template aggregations, and matrices $\mathbf{C} \in \R^{d \times d}, \mathbf{A} \in \R^{m \times d}, \mathbf{b} \in \R^d$ are defined based on $sub(\varphi)$ as follows:

\begin{enumerate}
	\item If $\varphi_k$ is a proposition, then $C_{kk} = 1$,
	\item If $\varphi_k = \neg \varphi_p$, then $C_{pk} = -1$ and $b_k = 1$,
	\item If $\varphi_k = \varphi_p \wedge \varphi_q$, then $C_{pk} = C_{qk} = 1$, and $b_k = -1$,
	\item If $\varphi_k = \varphi_{\iota(j)}$ corresponds to a modal formula of the form $\langle T^{(j)} \rangle^{\ge c_j} (\psi_{j,1}, \dots, \psi_{j, n_j})$, then $A_{jk} = 1$ and $b_k = -c_j + 1$,
\end{enumerate}
and all other entries are $0$. 

Next we show by induction on the structure of subformula $\varphi_k$ that for every graph $G$, every node $v$, and any layer $l \geq \sd(\varphi_k)$, $\lambda^l(v)_k = 1$ iff $(G,v) \models \varphi_k$.

For the base case, if $\varphi_k$ is a proposition, $\sd(\varphi_k) = 0$. By design of layer 0, $\lambda^0(v)_k = 1$ iff $(G,v) \models \varphi_k$, and $\lambda^l(v)_k = 0$ otherwise. As $C_{kk}=1$ preserves identity, $\lambda^l(v)_k = 1$ iff $(G,v) \models \varphi_k$ for all $l \ge 0$.

For the inductive step, assume the statement holds for all subformulae of syntactic depth less than $\varphi_k$. 
\begin{itemize}
	\item Case 1: $\varphi_k = \neg \varphi_p$. 
	By construction, $C_{pk} = -1$ and $b_k = 1$, we have $\lambda^l(v)_k = \sigma(-\lambda^{l-1}(v)_p + 1)$. By induction hypothesis, $\lambda^{l-1}(v)_p = 1$ iff $(G,v) \models \varphi_p$, then $\lambda^l(v)_k = \sigma(0) = 0$. Similarly, $\lambda^{l-1}(v)_p = 0$ iff $(G,v) \not \models \varphi_p$, then $\lambda^l(v)_k = \sigma(1) = 1$. Hence, $\lambda^l(v)_k$ captures $\neg \varphi_p$.
	\item Case 2: $\varphi_k = \varphi_p \wedge \varphi_q$. 
	By construction, $C_{pk} = C_{qk} = 1$ and $b_k = -1$, we have $\lambda^l(v)_k = \sigma( \lambda^{l-1}(v)_p + \lambda^{l-1}(v)_q - 1)$. By induction hypothesis, $\lambda^{l-1}(v)_p = 1$ iff $(G,v) \models \varphi_p$, and $\lambda^{l-1}(v)_q = 1$ iff $(G,v) \models \varphi_q$. Hence, $\lambda^l(v)_k = 1$ iff $(G,v) \models \varphi_p \wedge \varphi_q$, and $\lambda^l(v)_k = 0$ otherwise.
	\item Case 3: $\varphi_k = \varphi_{\iota(j)} = \langle T^{(j)} \rangle^{\ge c_j} (\psi_{j,1}, \dots, \psi_{j, n_j})$.
	Consider first the template aggregation function. By inductive hypothesis, $\lambda^{l-1}(f(i))_{id(j,i)} = 1$ iff $(G, f(i)) \models \psi_i$. The sum in \eqref{eq:aggT} equals to $n$ iff all subformulae $(\psi_{j,1}, \dots, \psi_{j, n_j})$ are satisfied, in which case $\sigma(n - n + 1) = 1$. The sum is $\le n-1$ otherwise, in which case $\sigma$ returns 0.
	Next consider the max-$n$-sum aggregation function bounded by $c_{max}$. Let $N$ be the number of valid embeddings. By construction, $A_{j,k} = 1$ and $b_k = -c + 1$, we have $\lambda^l(v)_k = \sigma(\min(c_{max}, N) - c + 1)$. Since $c \le c_{max}$, if $N \ge c$, then $\min(c_{max}, N) \ge c$, $\lambda^l(v)_k = 1$. If $N < c$, $\lambda^l(v)_k = 0$. Hence, $\lambda^l(v)_k = 1$ iff $N \ge c$ iff $(G,v) \models \varphi_k$, and $\lambda^l(v)_k = 0$ otherwise. \qedhere
\end{itemize}
\end{proof}

By combining Theorems \ref{thm:GNNtoLogic} and \ref{thm:LogictoGNN} we obtain that Boolean bounded $\T$-GNN node classifiers are exactly those that are definable in $\GML(\T)$.

\section{Conclusions and Future Work}
In this paper, we introduced template GNNs as a framework to study the (uniform) expressivity of different graph neural networks in a unified manner. In addition, we introduced the accompanying notions of $\T$-WL algorithm, graded $\T$-bisimulation, and graded modal logic $\GML(\T)$. We showed how various existing approaches to extend the expressivity of GNNs utilising diverse subgraph information can be formalised in our framework.

The main technical result of our paper is a metatheorem stating that, for any finite set of templates $\T$, $\GML(\T)$ captures the uniform expressivity of bounded counting $\T$-GNNs. 
Several existing characterisations, such as the ones by \cite{barcelo2020logical} and \cite{grau2025correspondence} can be seen as special cases of our metatheorem. Similarly, our metatheorem is directly applicable to the $k$-hop subgraph GNNs of \cite{chen2025expressive} and their version of the 1-WL algorithm.

A recent result by \cite{DBLP:journals/corr/abs-2508-06091} (discussed in the introduction) implies that, in general, our logical characterisation does not transfer to the case of non-bounded $\T$-GNNs and $\GML(\T)$, even if restricted to logical classifier definable in first-order logic.
However, since trivially $\T$-GNNs and bounded $\T$-GNNs have the same separation power---on any given graph, $\T$-GNNs aggregate over multisets of bounded cardinality---two graphs are separable by a $\T$-GNN if they are separable by the $\T$-WL algorithm, or equivalently, by a $\GML(\T)$-formula. 

\vspace{2mm}
\noindent\textbf{Open questions and future work.}
We conclude by discussing directions for future research.
\begin{itemize}
    \item Our framework is closely connected to the local graph parameter enabled GNNs of \cite{DBLP:conf/nips/BarceloGRR21}. Their $\mathcal{F}$-MPNNs and $\mathcal{F}$-WL algorithm obtain graph pattern information as our model does, but restricts to standard graph neighbour message passing. What is the precise relationship between these models?
    \item  Can we extend our framework to cover the Hierarchical Ego Graph Neural Networks \citep{soeteman2025logical} by some kind of hybrid extension of our logic.
    \item Can we extend our logical characterisations to cover non-bounded $\T$-GNNs by adding similar counting features to our logic as in \citep{DBLP:conf/icalp/BenediktLMT24,DBLP:journals/theoretics/Grohe24}.
    \item Can we extend our results to cover recursive neural networks using a form of $\mu$-calculus as in \citep{BVVV25}.
\end{itemize}

\bibliographystyle{kr}
\bibliography{refs1}

\begin{thebibliography}{}

\bibitem[\protect\citeauthoryear{Ahvonen \bgroup et al\mbox.\egroup
  }{2024}]{ahvonen2024logical}
Ahvonen, V.; Heiman, D.; Kuusisto, A.; and Lutz, C.
\newblock 2024.
\newblock Logical characterizations of recurrent graph neural networks with
  reals and floats.
\newblock In Globerson, A.; Mackey, L.; Belgrave, D.; Fan, A.; Paquet, U.;
  Tomczak, J.; and Zhan, C., eds., {\em Advances in Neural Information
  Processing Systems}, volume~37,  104205--104249.

\bibitem[\protect\citeauthoryear{Barcel{\'o} \bgroup et al\mbox.\egroup
  }{2020}]{barcelo2020logical}
Barcel{\'o}, P.; Kostylev, E.~V.; Monet, M.; P{\'e}rez, J.; Reutter, J.; and
  Silva, J.-P.
\newblock 2020.
\newblock The logical expressiveness of graph neural networks.
\newblock In {\em 8th International Conference on Learning Representations,
  {ICLR} 2020 Addis Ababa, Ethiopia, April 26-30, 2020}.
\newblock OpenReview.net.

\bibitem[\protect\citeauthoryear{Barcel{\'{o}} \bgroup et al\mbox.\egroup
  }{2021}]{DBLP:conf/nips/BarceloGRR21}
Barcel{\'{o}}, P.; Geerts, F.; Reutter, J.~L.; and Ryschkov, M.
\newblock 2021.
\newblock Graph neural networks with local graph parameters.
\newblock In Ranzato, M.; Beygelzimer, A.; Dauphin, Y.~N.; Liang, P.; and
  Vaughan, J.~W., eds., {\em Advances in Neural Information Processing
  Systems}, volume~34,  25280--25293.

\bibitem[\protect\citeauthoryear{Benedikt \bgroup et al\mbox.\egroup
  }{2024}]{DBLP:conf/icalp/BenediktLMT24}
Benedikt, M.; Lu, C.; Motik, B.; and Tan, T.
\newblock 2024.
\newblock Decidability of graph neural networks via logical characterizations.
\newblock In Bringmann, K.; Grohe, M.; Puppis, G.; and Svensson, O., eds., {\em
  51st International Colloquium on Automata, Languages, and Programming,
  {ICALP} 2024, Tallinn, Estonia, July 8-12, 2024}, volume 297 of {\em LIPIcs},
   127:1--127:20.
\newblock Schloss Dagstuhl - Leibniz-Zentrum f{\"{u}}r Informatik.

\bibitem[\protect\citeauthoryear{Bevilacqua \bgroup et al\mbox.\egroup
  }{2022}]{DBLP:conf/iclr/BevilacquaFLSCB22}
Bevilacqua, B.; Frasca, F.; Lim, D.; Srinivasan, B.; Cai, C.; Balamurugan, G.;
  Bronstein, M.~M.; and Maron, H.
\newblock 2022.
\newblock Equivariant subgraph aggregation networks.
\newblock In {\em 10th International Conference on Learning Representations,
  {ICLR} 2022, Virtual Event, April 25-29, 2022}.
\newblock OpenReview.net.

\bibitem[\protect\citeauthoryear{Bollen \bgroup et al\mbox.\egroup
  }{2025}]{BVVV25}
Bollen, J.; Van~den Bussche, J.; Vansummeren, S.; and Virtema, J.
\newblock 2025.
\newblock {Halting Recurrent GNNs and the Graded $\mu$-Calculus}.
\newblock In {\em {Proceedings of the 22nd International Conference on
  Principles of Knowledge Representation and Reasoning}},  175--184.

\bibitem[\protect\citeauthoryear{Bouritsas \bgroup et al\mbox.\egroup
  }{2023}]{DBLP:journals/pami/BouritsasFZB23}
Bouritsas, G.; Frasca, F.; Zafeiriou, S.; and Bronstein, M.~M.
\newblock 2023.
\newblock Improving graph neural network expressivity via subgraph isomorphism
  counting.
\newblock {\em IEEE Transactions on Pattern Analysis and Machine Intelligence}
  45(1):657--668.

\bibitem[\protect\citeauthoryear{Chen, Zhang, and
  Wang}{2025}]{chen2025expressive}
Chen, Z.; Zhang, Q.; and Wang, R.
\newblock 2025.
\newblock On the expressive power of subgraph graph neural networks for graphs
  with bounded cycles.
\newblock {\em arXiv preprint arXiv:2502.03703}.

\bibitem[\protect\citeauthoryear{Cuenca~Grau, Feng, and
  Wa{\l}{\k{e}}ga}{2026}]{grau2025correspondence}
Cuenca~Grau, B.; Feng, E.; and Wa{\l}{\k{e}}ga, P.~A.
\newblock 2026.
\newblock The correspondence between bounded graph neural networks and
  fragments of first-order logic.
\newblock {\em arXiv preprint arXiv:2505.08021}.
\newblock In AAAI 2026.

\bibitem[\protect\citeauthoryear{de
  Rijke}{2000}]{DBLP:journals/sLogica/Rijke00}
de~Rijke, M.
\newblock 2000.
\newblock A note on graded modal logic.
\newblock {\em Studia Logica} 64(2):271--283.

\bibitem[\protect\citeauthoryear{Frasca \bgroup et al\mbox.\egroup
  }{2022}]{DBLP:conf/nips/FrascaBBM22}
Frasca, F.; Bevilacqua, B.; Bronstein, M.~M.; and Maron, H.
\newblock 2022.
\newblock Understanding and extending subgraph gnns by rethinking their
  symmetries.
\newblock In Koyejo, S.; Mohamed, S.; Agarwal, A.; Belgrave, D.; Cho, K.; and
  Oh, A., eds., {\em Advances in Neural Information Processing Systems},
  volume~35,  31376--31390.

\bibitem[\protect\citeauthoryear{Gilmer \bgroup et al\mbox.\egroup
  }{2017}]{gilmer2017neural}
Gilmer, J.; Schoenholz, S.~S.; Riley, P.~F.; Vinyals, O.; and Dahl, G.~E.
\newblock 2017.
\newblock Neural message passing for quantum chemistry.
\newblock In {\em Proceedings of the 34th International Conference on Machine
  Learning},  1263--1272.

\bibitem[\protect\citeauthoryear{Grohe}{2024}]{DBLP:journals/theoretics/Grohe24}
Grohe, M.
\newblock 2024.
\newblock The descriptive complexity of graph neural networks.
\newblock {\em TheoretiCS} 3.

\bibitem[\protect\citeauthoryear{Hauke and
  Wa{\l}{\k{e}}ga}{2026}]{DBLP:journals/corr/abs-2508-06091}
Hauke, S.~P., and Wa{\l}{\k{e}}ga, P.~A.
\newblock 2026.
\newblock Aggregate-combine-readout gnns are more expressive than logic
  {$C^2$}.
\newblock {\em arXiv preprint arXiv:2508.06091}.
\newblock In AAAI 2026.

\bibitem[\protect\citeauthoryear{Hella \bgroup et al\mbox.\egroup
  }{2015}]{DBLP:journals/dc/HellaJKLLLSV15}
Hella, L.; J{\"{a}}rvisalo, M.; Kuusisto, A.; Laurinharju, J.;
  Lempi{\"{a}}inen, T.; Luosto, K.; Suomela, J.; and Virtema, J.
\newblock 2015.
\newblock Weak models of distributed computing, with connections to modal
  logic.
\newblock {\em Distributed Computing} 28(1):31--53.

\bibitem[\protect\citeauthoryear{Jin \bgroup et al\mbox.\egroup
  }{2024}]{DBLP:conf/icml/JinBCL24}
Jin, E.; Bronstein, M.~M.; Ceylan, {\.I}.~{\.I}.; and Lanzinger, M.
\newblock 2024.
\newblock Homomorphism counts for graph neural networks: All about that basis.
\newblock In {\em Proceedings of the 41st International Conference on Machine
  Learning},  22075--22098.

\bibitem[\protect\citeauthoryear{Linial}{1992}]{DBLP:journals/siamcomp/Linial92}
Linial, N.
\newblock 1992.
\newblock Locality in distributed graph algorithms.
\newblock {\em SIAM Journal on Computing} 21(1):193--201.

\bibitem[\protect\citeauthoryear{Morris \bgroup et al\mbox.\egroup
  }{2019}]{morris2019weisfeiler}
Morris, C.; Ritzert, M.; Fey, M.; Hamilton, W.~L.; Lenssen, J.~E.; Rattan, G.;
  and Grohe, M.
\newblock 2019.
\newblock Weisfeiler and leman go neural: Higher-order graph neural networks.
\newblock In {\em Proceedings of the AAAI Conference on Artificial
  Intelligence}, volume~33,  4602--4609.

\bibitem[\protect\citeauthoryear{Otto}{2019}]{otto2019graded}
Otto, M.
\newblock 2019.
\newblock Graded modal logic and counting bisimulation.
\newblock {\em arXiv preprint arXiv:1910.00039}.

\bibitem[\protect\citeauthoryear{Pflueger, Tena~Cucala, and
  Kostylev}{2024}]{pflueger2024recurrent}
Pflueger, M.; Tena~Cucala, D.; and Kostylev, E.~V.
\newblock 2024.
\newblock Recurrent graph neural networks and their connections to bisimulation
  and logic.
\newblock In {\em Proceedings of the AAAI Conference on Artificial
  Intelligence}, volume~38,  14608--14616.

\bibitem[\protect\citeauthoryear{Sato, Yamada, and
  Kashima}{2019}]{DBLP:conf/nips/SatoYK19}
Sato, R.; Yamada, M.; and Kashima, H.
\newblock 2019.
\newblock Approximation ratios of graph neural networks for combinatorial
  problems.
\newblock In Wallach, H.~M.; Larochelle, H.; Beygelzimer, A.;
  d'Alch{\'{e}}{-}Buc, F.; Fox, E.~B.; and Garnett, R., eds., {\em Advances in
  Neural Information Processing Systems}, volume~32,  4083--4092.

\bibitem[\protect\citeauthoryear{Scarselli \bgroup et al\mbox.\egroup
  }{2009}]{DBLP:journals/tnn/ScarselliGTHM09}
Scarselli, F.; Gori, M.; Tsoi, A.~C.; Hagenbuchner, M.; and Monfardini, G.
\newblock 2009.
\newblock The graph neural network model.
\newblock {\em IEEE Transactions on Neural Networks} 20(1):61--80.

\bibitem[\protect\citeauthoryear{Soeteman and ten
  Cate}{2025}]{soeteman2025logical}
Soeteman, A., and ten Cate, B.
\newblock 2025.
\newblock Logical expressiveness of graph neural networks with hierarchical
  node individualization.
\newblock {\em arXiv preprint arXiv:2506.13911}.
\newblock In NeurIPS 2025.

\bibitem[\protect\citeauthoryear{Tena~Cucala and
  Cuenca~Grau}{2024}]{DBLP:conf/kr/CucalaG24}
Tena~Cucala, D.~J., and Cuenca~Grau, B.
\newblock 2024.
\newblock Bridging max graph neural networks and datalog with negation.
\newblock In {\em Proceedings of the 21st International Conference on
  Principles of Knowledge Representation and Reasoning},  950--961.

\bibitem[\protect\citeauthoryear{Tena~Cucala \bgroup et al\mbox.\egroup
  }{2023}]{DBLP:conf/kr/CucalaGMK23}
Tena~Cucala, D.~J.; Cuenca~Grau, B.; Motik, B.; and Kostylev, E.~V.
\newblock 2023.
\newblock On the correspondence between monotonic max-sum gnns and datalog.
\newblock In {\em Proceedings of the 20th International Conference on
  Principles of Knowledge Representation and Reasoning},  658--667.

\bibitem[\protect\citeauthoryear{Weisfeiler and
  Leman}{1968}]{leman1968reduction}
Weisfeiler, B., and Leman, A.
\newblock 1968.
\newblock A reduction of a graph to a canonical form and an algebra arising
  during this reduction.
\newblock {\em Nauchno-Technicheskaya Informatsiya} 2(9):12--16.

\bibitem[\protect\citeauthoryear{Xu \bgroup et al\mbox.\egroup
  }{2019}]{xu2019powerful}
Xu, K.; Hu, W.; Leskovec, J.; and Jegelka, S.
\newblock 2019.
\newblock How powerful are graph neural networks?
\newblock In {\em 7th International Conference on Learning Representations,
  {ICLR} 2019, New Orleans, LA, USA, May 6-9, 2019}.
\newblock OpenReview.net.

\end{thebibliography}

\end{document}